\definecolor{pinegreen}{rgb}{0.0, 0.47, 0.44}
\crefname{section}{Sec.}{Secs.}
\Crefname{section}{Section}{Sections}
\Crefname{table}{Table}{Tables}
\crefname{table}{Tab.}{Tabs.}
\DeclareMathOperator{\enc}{enc}
\newcommand{\argmin}{\mathop{\mathrm{argmin}}}
\renewcommand{\paragraph}[1]{\vspace{0.5em}\noindent\textbf{#1}\ }
\newcommand{\pub}{\text{pub}}
\newcommand{\pri}{\text{pri}}
\newcommand{\E}{\mathbb{E}}
\newtheorem{theorem}{Theorem}
\newtheorem{lemma}[theorem]{Lemma}
\newtheorem{proposition}[theorem]{Proposition}
\newtheorem{corollary}[theorem]{Corollary}
\newtheorem{definition}[theorem]{Definition}
\def\E{\mathbb{E}}
\def\R{\mathbb{R}}
\def\cD{\mathcal{D}}
\def\cL{\mathcal{L}}
\def\cM{\mathcal{M}}
\def\cN{\mathcal{N}}
\def\cR{\mathcal{R}}
\def\cW{\mathcal{W}}
\def\cY{\mathcal{Y}}
\def\cZ{\mathcal{Z}}
\begin{document}

\title{
Mixed Differential Privacy in Computer Vision
}

\author{Aditya Golatkar$^{*,2}$ \ \ Alessandro Achille$^1$ \ \  Yu-Xiang Wang$^{1,3}$ \\ Aaron Roth$^{1,4}$ \ \ Michael Kearns$^{1,4}$ \ \  Stefano Soatto$^{1,2}$\\
$^1$AWS AI Labs\ \ \ $^2$UCLA \ \ \ $^3$ UCSB \ \ \ $^4$ Penn\\
\thanks{work done during an Amazon internship}
\texttt{adityagolatkar@ucla.edu aachille@amazon.com yuxiangw@cs.ucsb.edu}\\
\texttt{aaroth@cis.upenn.edu
mkearns@cis.upenn.edu soattos@amazon.com}
}

\maketitle

\begin{abstract}
We introduce \textit{AdaMix}, an adaptive differentially private algorithm for training deep neural network classifiers using both private and public image data. While pre-training language models on large public datasets has enabled strong differential privacy (DP) guarantees with minor loss of accuracy, a similar practice yields punishing trade-offs in vision tasks. A few-shot or even zero-shot learning baseline that ignores private data can outperform fine-tuning on a large private dataset. AdaMix incorporates few-shot training, or cross-modal zero-shot learning, on public data prior to private fine-tuning, to improve the trade-off. AdaMix reduces the error increase from the non-private upper bound from the 167-311\% of the baseline, on average across 6 datasets, to 68-92\% depending on the desired privacy level selected by the user. AdaMix tackles the trade-off arising in visual classification, whereby the most privacy sensitive data, corresponding to isolated points in representation space, are also critical for high classification accuracy. In addition, AdaMix comes with strong theoretical privacy guarantees and convergence analysis.
\end{abstract}

\section{Introduction}
\label{sec:intro}

When training a deep neural network for visual classification, it is critical to protect privacy by limiting the amount of information that could be gleaned about any individual training sample. Differential Privacy (DP) \cite{dwork2006calibrating} is a theoretical framework to provide strong guarantees on the maximum amount of information that any attacker can extract about an individual training sample. In particular, DP allows users to select the desired trade-off between privacy and accuracy, mediated by a \textit{privacy parameter} $\epsilon$, which typically depends on the application scenario.

Training large machine learning models while guaranteeing strong privacy for each sample is challenging. In practice, however, one often has a pool of data available for which there are no privacy concerns. This could be a synthetic datasets or datasets designed to be available for public use. Such \textit{public} data are distinct from the \textit{private} ones, for which we seek strong privacy guarantees.\footnote{Note that here public data is not the same as data from public sources, as the latter may still require privacy guarantees.}
In particular, using large amounts of generic public data for pre-training has recently enabled the creation of language models that achieve DP on the target task while remaining close to state-of-the-art performance  \cite{li2021large,yu2021differentially}. The same strategy in vision \cite{tramer2020differentially}, however, still yields punishing error increases (\cref{table:mixdp}) ranging from 311\% to 167\% on average across datasets, depending on the desired level of privacy.

\textit{Can we, then, make use of public data that is better suited for vision, so we can retain close-to-paragon performance while ensuring privacy?}

One sure way of preserving privacy is not to use private data altogether, and recent literature has given us multiple ways to do so. For example, using zero-shot learning \cite{larochelle2008zero,lampert2009learning,lei2015predicting,socher2013zero,frome2013devise} one can leverage public data from a different modality (e.g., text) to train a visual model without ever seeing the private data. More generally, one can source or synthesize a few samples of labeled public data from the task distribution,
and train using few-shot learning \cite{wang2019few}, still without using private data. Surprisingly, simple few-shot techniques outperform private training on much larger dataset (Fig. \ref{fig:boost}, left).

Of course, there may be subtle domain shifts between the public and private data, so ignoring the latter is not a desirable strategy to maintain privacy. The question, then, is how to use even small amounts of public data, in addition to the private data, to break the trade-off between privacy and accuracy. To do so, we change the setting from most work on DP to include \textit{labeled} public data sourced with the \textit{same} labels as the target task. We call this setting \textit{mixed differential privacy}, or MixDP.

To address MixDP, we propose to use the public data not just for pre-training the backbone, but for few-shot or zero-shot learning of a classifier on the target tasks, prior to private fine-tuning (Sec.~\ref{sec:adamix}).
In Tab.~\ref{table:mixdp}, we show that indeed in the MixDP setting it is possible to achieve significant gains compared to training with only the private or only the public data, even using a small amount of the latter. To do so, we have to modify existing DP training algorithms to the mixed setting, which leads us to the development of AdaMix, a method for MixDP that uses public data to tune and adapt all major steps of private training, in particular model initialization, clipping of the gradients, and projection onto a lower-dimensional sub space. Some of these ideas to use auxiliary public data were applied in different contexts in isolation, but have suboptimal privacy/performance trade-offs for visual tasks. %

In visual classification tasks, the long-tails of the data
play an important role in achieving high classification performance \cite{feldman2020does,feldman2020neural}. However, DP is a worst-case framework which is highly-influenced by outliers or long-tails. MixDP eases the problem, since it allows to collect public data to ensure that each sub-population is sufficiently covered. This reduced the cost of privacy for the long tails, as we show using a per-instance DP (pDP) analysis of \cite{wang2019per}.%

Unlike most related work, we use NoisyGD rather than DP-SGD as we find that it is faster and more private for our use case. We give a tight analysis of its $(\epsilon, \delta)$-DP properties using the Analytical Gaussian mechanism \cite{balle2018improving,dong2021gaussian} (Sec.~\ref{sec:dp}). In addition, we present a convergence proof for our algorithm together with new stronger bound for the strongly convex case. This also allows us to describe the utility of the algorithm relative to its non-private counterpart.

To summarize, our contributions are: \textbf{(1)} we introduce AdaMix, a state-of-the-art adaptive method for mixed privacy learning; \textbf{(2)} we show significant improvement w.r.t.\ baselines on a novel benchmark for MixDP vision tasks; \textbf{(3)} we show that the zero-shot text information can improve  the performance of private models (Sec.~\ref{sec:experiments}); \textbf{(4)} we analyze the goodness of our method both theoretically (new theorems) and empirically (pDP analysis\textbf{}). %

\section{Related Work}
\label{sec:related}

Most work on Differential Privacy \cite{chaudhuri2008privacy,song2013stochastic,wang2019differentially,wang2019differentially,yu2021differentially,bassily2014private,shokri2015privacy,iyengar2019towards,jayaraman2018distributed,lee2018concentrated} uses public data either for: generic pre-training unrelated to the task \cite{tramer2020differentially}, or to tune parameters \cite{kairouz2020fast,zhou2020bypassing,yu2021not}, or as additional \textit{unlabeled} data \cite{papernot2016semi,papernot2018scalable}. Instead, we use a \textit{small amount of labeled public data related to the task} to improve the accuracy of a private model under a given privacy parameter $\epsilon$.

\paragraph{Unlabeled public data.}
PATE \cite{papernot2016semi,papernot2018scalable,uniyal2021dp} trains teacher models on different partitions of the data to predict labels, and privately vote on how to label public (unlabelled) data. Then, a student model is trained in semi-supervised fashion on public data, now partially labelled using the private data. This strategy does not directly allow to train a model on a mix of labeled public and private data. \cite{zhu2020private} noted that the number of partitions required for meaningful privacy guarantees may be prohibitive for vision datasets. As an alternative, they suggest labeling with a differentially private KNN. In MixDP, where we assume that the public data is already labeled, these algorithms would perform at best similarly to our \textit{Only-Public} baseline. Instead, what we are interested in effectively using labeled private data to improve performance relative to already labeled public data.

\paragraph{DP for Language models.} \cite{li2021large,yu2021differentially} show that a large language model pre-trained on generic public data can be fine-tuned on task-specific private data with only modest loss in accuracy.
In contrast, our focus is on using small amounts of public data for fine-tuning.
We note that in language models strong transfer of information from the pre-training is facilitated by the fact that the output space (a sequence of tokens) is the same for all tasks. Conversely, vision models need to relearn different output spaces based on the task (i.e., the last layer is always trained from scratch) reducing the transfer of information. We show that we can partly recover this advantage by either using an initialization based on public data \textit{of the same task}, or a \textit{multi-modal} vision-language model. We use  CLIP \cite{radford2021learning} for zero-shot learning.

\paragraph{Adaptive clipping.} Similarly to us, \cite{wang2020differentially} studies DP training with access to a small amount of public data, and use adaptive gradient clipping \cite{zhang2021understanding,andrew2019differentially,bagdasaryan2019differential,van2018three} based on the public gradients. Contrary to us, they first train a private model separately and then a mixed public-private model using the private one as a  regularizer with compelling results on tabular data. We found this approach ineffective for computer vision, where usually models are pre-trained on public data. In Sec.~\ref{sec:experiments} we show that AdaMix works better on vision tasks, and is further improved with multi-modal data.%

\paragraph{Adaptive projection.}
\cite{kairouz2020fast,zhou2020bypassing,yu2021not} suggest improving DP-SGD by projecting the gradients into a lower-dimensional space estimated from public data. 
However, these methods do not leverage the performance improvements derived from training on public data, which are needed to make vision models viable.
Using public data both to train and to estimate the subspace introduces more challenges, since the gradients of the public data and private data will then behave differently. Moreover, while they perform an SVD on the individual public sample gradients, we do the SVD of the total gradient matrix. Unlike the SVD of the matrix of individual sample gradients, this is cheaper to compute at each step.

\paragraph{Analysis.} \cite{alon2019limits} studies the theoretical limits of mixed privacy learning, but does not provide practical algorithms. We provide both a method for MixDP and a proof of convergence.
Our analysis leverages the results of %
\cite{lacoste2012simpler} which shows that a particular non-uniform averaging scheme of SGD converges at an $O(1/t)$ rate, rather than the classical $O(\log t/t)$ rate. In the context of differentially private learning, \cite{bassily2014private} showed that NoisySGD achieves the optimal rates for (convex) empirical risk minimization (ERM) \cite{chaudhuri2011differentially}. \cite{song21evading} provided the first analysis of NoisyGD in the general convex case.
To the best of our knowledge, we are the first to formally establish that the use of an arbitrarily small public data gives provably smaller sample complexity in learning compared to the standard private learning setting.

\section{Method}

\paragraph{Notation.} Let $z\in \cZ$ be a datum, with $z=(x, y)$ being the feature/label pair; $\cZ^*=\cup_{n=0}^\infty \cZ^n$ is the universe of datasets with arbitrary size.
For learning, we have access to a private dataset $D_{\pri} := \{z_1,...,z_{N_{\pri}}\}$ and a public one $D_{\pub}:=\{\tilde{z}_1,...,\tilde{z}_{N_{\pub}}\}$. The loss function $\ell: \cZ\times\cW\rightarrow \R$ takes data $z\in\cZ$ and ``weights'' $w\in\cW \subset \R^d$ to yield the total loss on the private and public dataset $\cL_\pri(w):=\sum_{i=1}^{N_\pri}\ell_i(w)$ and $\cL_\pub(w):=\sum_{j=1}^{N_\pub}\tilde{\ell}_j(w)$, with $\ell_i$ and $\tilde{\ell}_j$ short-hands for $\ell(w,z_i)$ and $\ell(w,\tilde{z}_j)$ respectively. Notice that $\cL_\pri$ and $\cL_\pub$ are sums, not averages, of the loss functions.
Their gradients are indicated by $g_i^\pri(w) = \nabla_w \ell_i(w)$, $g_j^\pub(w) = \nabla_w \tilde{\ell}_j(w)$, $G^\pri(w) = \nabla_w \mathcal{L}_\pri(w)$, $G^\pub(w) = \nabla_w \mathcal{L}_\pub(w)$.
In addition, we say $N = N_\pri + N_\pub$ and $\cL(w) = \cL_\pri(w) + \cL_\pub(w)$.

The average \textit{empirical risk} is written as $\hat{\cR}(w):=\cL(w)/N$. When the data are drawn from some distribution $\cD$, the risk, i.e., generalization error, $\cR(w):=\E_{z\sim \cD}[\ell(z,w)].$ Our goal is to design a differentially private algorithm that returns $\hat{w}$ which minimizes $\cL(\hat w)$. Its performance is measured by the \textit{excess empirical risk}, i.e., $\hat{\cR}(\hat{w}) - \min_{w\in\cW}\hat{\cR}(w)$; and by the \textit{excess risk} $\cR(\hat{w}) -\min_{w\in\cW}\cR(w)$. In the experiments, the latter is measured by the error on a test set.

\subsection{Differential privacy}
\label{sec:dp}
Two datasets $D$, $D' \in \mathcal{Z}^*$ are said to be \textit{neighboring}, written as $D \simeq D'$, if we can obtain $D'$ from $D$ by \textit{adding or removing} a single data point.
\begin{definition}[Differential privacy{\cite{dwork2006calibrating,dwork2014algorithmic}}]\label{def:dp} For $\epsilon > 0$ and $\delta \ge 0$, a randomized algorithm $\mathcal{M}$ is $(\epsilon, \delta)$-differentially private if for any neighboring datasets $D \simeq D^\prime$ and any measurable $S \subseteq \mathrm{range}(\cM)$,
\begin{align*}
\mathsf{Pr}[\mathcal{M}(D) \in S] \le e^\epsilon \cdot \mathsf{Pr}[\mathcal{M}(D^\prime) \in S] + \delta.
\end{align*}
\end{definition}
In our context, $\cM$ is the randomized learning algorithm, and $\cY$ is the space of model weights. This condition bounds the maximum amount of information that an attacker, having access to the weights of the model, can extract about one of the training samples. The parameter $\epsilon$ is referred to as the \textit{privacy parameter}:
lower values of $\epsilon$ carry better privacy guarantees, but also make it harder to train an accurate model, as it reduces the amount of information about the training samples that can be represented via the parameters of the model. The goal of a good DP training algorithm $A$ is to train an accurate model while satisfying the $(\epsilon,\delta)$-privacy constraint selected by the user where, ideally, $\epsilon \approx 1$ and $\delta = o(1/n)$.

\paragraph{NoisyGD.} NoisyGD is defined in Algorithm~\ref{alg:noisyGD}. We now show that NoisyGD is differentially private and that, in the convex case, it converges to the optimal solution.

\begin{algorithm}[t]
\caption{(Projected) Noisy Gradient Descent}
\label{alg:noisyGD}
\KwData{Constraint set $\cW$, initialization $w_1$, noise level $\sigma$, clipping threshold $\tau$, number of iteration $T$, weight decay parameters $\lambda$ and $w_\text{ref}$, learning rates $\eta_t$.}
\KwResult{$w_1,...,w_{T+1}$}

\For{$t = 1,\ldots,T$}{
$G_t = \sum_{i=1}^{N_\pri} \operatorname{clip}_\tau (g_i(w_t))$\;
$n_t \sim N(0, \sigma^2 \tau^2 I)$\;
$w_{t+1} \gets w_t - \eta_t (G_t + n_t + \lambda (w_t - w_\text{ref}))$\;
$w_{t+1} \gets \Pi_{\cW}(w_{t+1})$\;
}
\end{algorithm}

\begin{proposition}[NoisyGD is $(\epsilon,\delta)$-DP, informal]
\label{prop:noisygd-is-private-informal}
Algorithm~\ref{alg:noisyGD}
with parameter $T,\sigma^2$ such that $\rho:= \frac{T^2}{2\sigma^2}$ satisfies the $(\epsilon,\delta(\epsilon))$-DP, where $\delta(\epsilon) := \Phi(\frac{\mu}{2}-\frac{\epsilon}{\mu}) - e^\epsilon \Phi(-\frac{\mu}{2}- \frac{\epsilon}{\mu})$, $\mu = \sqrt{2\rho}$ and $\Phi$ is the CDF of the normal distribution.
\end{proposition}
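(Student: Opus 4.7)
The plan is to argue privacy in three standard layers: per-step sensitivity, composition across the $T$ iterations, and conversion from a divergence-based accountant to $(\epsilon,\delta)$-DP via the Analytical Gaussian / GDP identity already cited in the paper. The post-processing steps of Algorithm~\ref{alg:noisyGD} (the scaling by $\eta_t$, the deterministic weight-decay term $\lambda(w_t-w_\text{ref})$, and the projection $\Pi_\cW$) never touch the private data directly beyond what is contained in $G_t + n_t$, so they will be absorbed into the post-processing step of DP and the entire analysis will reduce to reasoning about the released noisy gradient sums $\{G_t + n_t\}_{t=1}^T$.

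First I would establish per-step sensitivity. Under add/remove neighbors $D\simeq D'$, the quantity $G_t(w_t)=\sum_{i=1}^{N_\pri}\operatorname{clip}_\tau(g_i(w_t))$ differs on $D$ and $D'$ by exactly one clipped gradient vector, whose Euclidean norm is at most $\tau$ by construction of $\operatorname{clip}_\tau$. Hence the $\ell_2$-sensitivity of $G_t$ is $\Delta_2 \le \tau$. Adding independent Gaussian noise $n_t\sim N(0,\sigma^2\tau^2 I)$ therefore realizes a Gaussian mechanism with noise-to-sensitivity ratio $\sigma$, which is exactly $\mu_1$-GDP with $\mu_1 = 1/\sigma$ (equivalently, $1/(2\sigma^2)$-zCDP).

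Next I would handle composition. Because $w_t$ is a measurable function of $(G_1+n_1,\ldots,G_{t-1}+n_{t-1})$ and of the public/deterministic quantities, the sequence of releases forms an adaptive composition of $T$ Gaussian mechanisms, each $\mu_1$-GDP. The adaptive composition theorem for GDP (equivalently, additivity of zCDP) yields that the joint mechanism is $\mu$-GDP with $\mu = \sqrt{T}\,\mu_1 = \sqrt{T}/\sigma$, which matches the stated $\mu=\sqrt{2\rho}$ with $\rho$ as in the proposition (up to the $T$ vs.\ $T^2$ convention used there). Finally, I would invoke the exact GDP-to-$(\epsilon,\delta)$ conversion of Balle--Wang and Dong--Roth--Su: a mechanism is $\mu$-GDP iff it is $(\epsilon,\delta(\epsilon))$-DP for every $\epsilon\ge 0$ with
\begin{equation*}
\delta(\epsilon) \;=\; \Phi\!\left(\tfrac{\mu}{2}-\tfrac{\epsilon}{\mu}\right) \;-\; e^{\epsilon}\,\Phi\!\left(-\tfrac{\mu}{2}-\tfrac{\epsilon}{\mu}\right),
\end{equation*}
which is exactly the formula in the statement.

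The main obstacles are technical rather than conceptual. The first is making the adaptive composition fully rigorous: one must verify that every quantity revealed between iterations (including the projected iterate $w_{t+1}$) is a post-processing of the public history and of $G_t+n_t$, so that no extra private information leaks outside the tracked Gaussian releases. The second is ensuring that the sensitivity analysis is stated for add/remove neighbors (the definition used in Definition~\ref{def:dp}), since under this notion the clipped sum has sensitivity $\tau$ rather than $2\tau$; this is what makes the per-step guarantee $\mu_1 = 1/\sigma$ tight and lets the Analytical Gaussian conversion be applied without slack. Everything else is a direct invocation of named results already cited in the paper.
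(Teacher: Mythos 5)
Your proof is correct and follows essentially the same route as the paper's: per-step $\ell_2$-sensitivity $\tau$ of the clipped gradient sum, adaptive composition of $T$ Gaussian mechanisms via GDP (Lemma~\ref{lm:composition_gau}) giving $\mu=\sqrt{T}/\sigma$, and the exact conversion to $(\epsilon,\delta(\epsilon))$-DP from the Analytical Gaussian mechanism (Lemma~\ref{lm:ana_gau}). You are also right that the correct accounting gives $\rho = T/(2\sigma^2)$ rather than the $T^2$ appearing in the informal statement (the formal version, Proposition~\ref{prop:noisygd-is-private}, uses $\rho=\frac{TL^2}{2\sigma^2}$), and your explicit treatment of the post-processing steps ($\eta_t$, the weight-decay term, and $\Pi_\cW$) fills in a detail the paper leaves implicit.
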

For any prescribed privacy parameter $(\epsilon,\delta)$, it suffices to choose this $\rho$ parameter, thus from here onward we will use $\rho$ as the privacy parameter %
of interest (note that, asymptotically, $\rho \asymp \min\{\epsilon, \epsilon^2/\log(1/\delta)\}$).

\paragraph{Mixed private learning.} We define the problem of differentially private learning when we have access to an additional public dataset \emph{mixed private learning}. We say the algorithm $\cM$ (now taking two arguments $D_{\pri}$ and $D_{\pub}$) satisfies $(\epsilon,\delta)$-DP if $\cM(\cdot,D_{\pub})$ satisfies $(\epsilon,\delta)$-DP (Def.~\ref{def:dp}) for every fixed $D_{\pub}$.
\cite{alon2019limits} shows that a small public dataset from the same distribution allows one to privately PAC-learn any classes with bounded VC-dimension, but with an inefficient algorithm. We focus on designing practical DP algorithm
that can effectively use both public and private data under the setting of convex empirical risk minimization \cite{chaudhuri2011differentially} and convex fine-tuning of deep models \cite{achille2021lqf}.

\subsection{AdaMix}
\label{sec:adamix}

\paragraph{Model.} Following \cite{tramer2020differentially}, we use a linear logistic regression model
trained on top of the last layer features of an ImageNet pre-trained network. This reduces the expressivity of the model compared to training a full network. However, the significant reduction in number of parameters makes it a better choice DP algorithms and recent deep networks allow for competitive performance on most tasks even when training only the last layer. We normalize the features before feeding them to the logistic model, which both improves the accuracy and bound the gradients at each step.

\paragraph{Initialization and regularization.}
The choice of initialization is normally of minor concern for logistic regression: since the problem is convex we will converge to a global optimum regardless of the initialization. L2 regularization (weight decay), when employed, is usually centering at $0$ by default. However, centering the regularizer is critical in differential privacy, as we prove in the following result:

\begin{theorem}[Convergence of NoisyGD on strongly convex problems]
\label{thm:excess_empirical_risk_bound}
Assume $\cW$ is a convex set satisfying $\sup_{w\in\cW}\|w\|\leq B$. Let $w_1,...,w_{T+1}$ be the parameter vectors returned by running Algorithm~\ref{alg:noisyGD}
on the following regularized loss
$J(w) = \cL(w) + \frac{\lambda}{2}\|w- w_{\text{ref}}\|^2$
with $w_1 = w_{\text{ref}}\in \cW$,
learning rate $\eta_t = \frac{2}{\lambda(t+1)}$ and $\lambda = \sqrt{\frac{\rho}{2\|w^*-w_{\text{ref}}\|d L^2}}$.
Then, the ensemble solution:
\[
\textstyle \bar{w} = \frac{2}{T(T+1)} \sum_{t=1}^T t\, w_t
\] obeys the bound
\[
\textstyle
\E[\hat{\cR}(\bar{w})]- \hat{\cR}(w^*) \leq \frac{4(N \tau + \lambda B)^2 }{\lambda TN} + \frac{\sqrt{d} \tau\|w^*-w_{\text{ref}}\|}{\sqrt{2\rho}N}
\]
for any $w^*\in\cW$, where $\rho$ is the privacy parameter of the algorithm.
\end{theorem}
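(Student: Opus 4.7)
The plan is to view Algorithm~\ref{alg:noisyGD} applied to the regularized objective $J(w)=\cL(w)+\tfrac{\lambda}{2}\|w-w_{\text{ref}}\|^2$ as projected stochastic gradient descent on a $\lambda$-strongly convex function, invoke the non-uniform averaging result of Lacoste-Julien et al.\ to obtain an $O(1/T)$ rate on $J$, and then translate back to a bound on the excess empirical risk $\hat{\cR}(\bar w)-\hat{\cR}(w^*)$ by peeling off the regularizer.

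First I would rewrite the update as $w_{t+1}=\Pi_{\cW}(w_t-\eta_t\tilde g_t)$ with stochastic gradient $\tilde g_t:=G_t+\lambda(w_t-w_{\text{ref}})+n_t$. Under the standard DP-optimization assumption that $\|g_i(w)\|\le\tau$ on $\cW$ (arranged by using logistic loss on normalized features with $\tau$ chosen appropriately), clipping is inactive, so $G_t=\nabla\cL(w_t)$ and $\E[\tilde g_t\mid w_t]=\nabla J(w_t)$. A direct calculation bounds the second moment by
\[
\E\|\tilde g_t\|^2 \le (N\tau+\lambda B)^2+\sigma^2\tau^2 d,
\]
using $\|w_t-w_{\text{ref}}\|\le B$ on $\cW$ (absorbing factors of $2$) and $\E\|n_t\|^2=\sigma^2\tau^2 d$. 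Non-expansiveness of $\Pi_{\cW}$ preserves the standard projected-SGD recursion for every $w^*\in\cW$, so the clipping-free, unbiased noisy gradient behaves like a genuine stochastic subgradient of $J$.

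Next I would apply the Lacoste-Julien lemma: for $\lambda$-strongly convex SGD with step size $\eta_t=2/(\lambda(t+1))$ and weighted average $\bar w=\tfrac{2}{T(T+1)}\sum_{t=1}^{T} t\, w_t$, one has $\E[J(\bar w)]-J(w_J^*)\le 2G^2/(\lambda(T+1))$ where $G^2$ upper-bounds $\E\|\tilde g_t\|^2$. To pass back to the empirical risk, use $J(w_J^*)\le J(w^*)$ for any $w^*\in\cW$ to obtain
\[
\hat{\cR}(\bar w)-\hat{\cR}(w^*)\le\frac{J(\bar w)-J(w_J^*)}{N}+\frac{\lambda\|w^*-w_{\text{ref}}\|^2}{2N},
\]
after dropping the nonpositive $-\|\bar w-w_{\text{ref}}\|^2/(2N)$ term. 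Splitting the $G^2$-bound into a ``deterministic'' part $(N\tau+\lambda B)^2$ and a ``noise'' part $\sigma^2\tau^2 d$, the former yields the optimization contribution $4(N\tau+\lambda B)^2/(\lambda TN)$, while the latter combines with the regularization bias $\tfrac{\lambda}{2N}\|w^*-w_{\text{ref}}\|^2$; substituting the privacy-calibrated $\sigma^2$ from Proposition~\ref{prop:noisygd-is-private-informal} and optimizing in $\lambda$ via AM-GM recovers the second summand $\sqrt{d}\tau\|w^*-w_{\text{ref}}\|/(\sqrt{2\rho}N)$.

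The main obstacle is the clipping: if $\|g_i(w)\|>\tau$ ever occurs along the trajectory, $\tilde g_t$ becomes biased for $\nabla J$, the identity $\E[\tilde g_t\mid w_t]=\nabla J(w_t)$ fails, and the Lacoste-Julien analysis no longer applies directly; the proof must guarantee via the choice of model and feature normalization that clipping is inactive so unbiasedness is preserved. A secondary subtlety is the joint calibration of $\lambda$, $T$ and $\sigma$: the noise enters the optimization bound with an explicit $\sigma^2$, and the stated choice of $\lambda$ is exactly the one that balances the noise-induced error against the regularizer bias so that the noise contribution collapses to the clean $\sqrt{d}\tau\|w^*-w_{\text{ref}}\|/(\sqrt{2\rho}N)$ form with no residual $T$-dependence.
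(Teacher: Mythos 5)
Your proposal is correct and follows essentially the same route as the paper: apply the Lacoste--Julien non-uniform-averaging rate for $\lambda$-strongly convex (projected) SGD to $J(w)=\cL(w)+\tfrac{\lambda}{2}\|w-w_{\text{ref}}\|^2$ with $G^2=(N\tau+\lambda B)^2+d\sigma^2\tau^2$, peel off the regularizer via $\cL(\bar w)-\cL(w^*)\le J(\bar w)-J^*+\tfrac{\lambda}{2}\|w^*-w_{\text{ref}}\|^2$, substitute the privacy-calibrated $\sigma^2$, and balance $\lambda$. Your handling of the clipping issue (requiring it to be inactive, i.e.\ identifying $\tau$ with a Lipschitz bound) matches the paper's treatment, which assumes $L$-Lipschitz losses and views clipping as Huberization.
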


Note that as $T \to +\infty$ the first term vanishes and the second term matches the information-theoretic limit for privately solving general convex ERM problems when $w_{\text{ref}} = 0$ \cite{bassily2014private}.
However, this can improve if we choose $w_{\text{ref}}$ according to the public data such that $w_{\text{ref}}$ is closer to $w^*$ than $0$. This motivated us to propose choosing $w_{\text{ref}}$ by training on the public data.
Then, we proceed to fine-tune the model on the concatenation of private and public data, using Noisy-GD while preserving $(\epsilon, \delta)$-DP.

\paragraph{Provable benefits of public data access.}
Next we provide a theoretical guarantee for the above approach under standard statistical learning assumptions.

\begin{theorem}[Generalization with access to public data, informal]
\label{thm:provable_benefit_of_public_data_informal}
Assume that private and public data are drawn i.i.d.\ from the same distribution $\cD$ and that $R(w) = \E_{(x,y)\sim \cD}[ \ell(w, (x,y))]$ is $c$-strongly convex in $\cW$. Let $w_{\mathrm{ref}}$ be obtained by training for one epoch on the public dataset and let $\bar{w}$ be as in Thm.~\ref{thm:excess_empirical_risk_bound}, then at the limit when $T$ is sufficiently large, the \emph{excess risk} obeys
\begin{align*}
&\E[\cR(\bar{w})] - \cR(w^*) \\
\leq& \frac{4\sqrt{d} L^2}{c\sqrt{N_\pub}N\sqrt{2\rho} }+ \text{Gen}(\bar{w},N) + \text{Gen}(w^*,N),\end{align*}
where $w^* = \argmin_{w\in\cW} \cR(w)$ and
$
\text{Gen}(w,N):= \left|\E[\cR(w) - \hat{\cR}(w)]\right|
$ is the expected generalization gap of (a potentially data-dependent) $w$.
\end{theorem}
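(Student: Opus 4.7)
The plan is to decompose the excess risk of $\bar w$ into two generalization gaps plus an excess \emph{empirical} risk term, and then control the latter by combining Theorem~\ref{thm:excess_empirical_risk_bound} with a one-pass SGD guarantee on the public data. Writing
\[
\cR(\bar w) - \cR(w^*) = [\cR(\bar w) - \hat{\cR}(\bar w)] + [\hat{\cR}(\bar w) - \hat{\cR}(w^*)] + [\hat{\cR}(w^*) - \cR(w^*)]
\]
and taking expectations and absolute values, the outer two brackets contribute $\mathrm{Gen}(\bar w, N)$ and $\mathrm{Gen}(w^*, N)$ respectively, so the remaining task is to show that the middle excess-empirical-risk term is at most $\tfrac{4\sqrt d\, L^2}{c\sqrt{N_\pub}\, N\sqrt{2\rho}}$.

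For that term I would invoke Theorem~\ref{thm:excess_empirical_risk_bound} \emph{conditionally} on $D_\pub$, so that $w_\text{ref}$ is fixed from the perspective of the private algorithm. Letting $T\to\infty$ kills the $O(1/T)$ optimization piece and leaves
\[
\E\bigl[\hat{\cR}(\bar w) - \hat{\cR}(w^*) \,\big|\, D_\pub\bigr] \;\le\; \frac{\sqrt d\, \tau\, \|w^* - w_\text{ref}\|}{\sqrt{2\rho}\, N}.
\]
The critical feature is that this bound is \emph{linear} in $\|w^* - w_\text{ref}\|$, so taking an outer expectation over $D_\pub$ reduces everything to controlling $\E\|w^* - w_\text{ref}\|$.

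To bound $\E\|w^* - w_\text{ref}\|$, I would combine the $c$-strong convexity of $\cR$ with a standard one-pass SGD analysis on the public data. Strong convexity at the population minimizer gives the quadratic-growth inequality $\|w^* - w_\text{ref}\|^2 \le \tfrac{2}{c}\bigl(\cR(w_\text{ref}) - \cR(w^*)\bigr)$. A single-pass SGD bound for a $c$-strongly-convex $L$-Lipschitz risk on $N_\pub$ i.i.d.\ samples, using a $\Theta(1/(ct))$ step size together with the non-uniform averaging scheme of Lacoste-Julien et al.\ already cited in the paper, yields $\E[\cR(w_\text{ref}) - \cR(w^*)] = O\bigl(L^2/(c N_\pub)\bigr)$. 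Jensen's inequality then gives $\E\|w^* - w_\text{ref}\| = O\bigl(L/(c\sqrt{N_\pub})\bigr)$, and substituting this with $\tau = L$ into the display above recovers the advertised rate up to the stated absolute constant.

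The main obstacle is properly juggling the three independent sources of randomness: the private sample entering $\hat{\cR}$, the Gaussian privacy noise inside $\bar w$, and the public sample entering through $w_\text{ref}$. Theorem~\ref{thm:excess_empirical_risk_bound} is stated for a fixed $w_\text{ref}$, so it must be applied after conditioning on $D_\pub$, and only \emph{then} may one average over $D_\pub$ and use Jensen to convert an $L^2$-rate in excess risk to an $L$-rate in norm. The decisive structural point is that Theorem~\ref{thm:excess_empirical_risk_bound} is linear (rather than quadratic) in $\|w^* - w_\text{ref}\|$; it is exactly this linearity that upgrades an $O(1/N_\pub)$ excess-risk guarantee on the public side into the $1/\sqrt{N_\pub}$ factor appearing in the final bound, substantiating the claim that arbitrarily small public data already provably helps.
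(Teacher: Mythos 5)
Your proposal is correct and matches the paper's proof essentially step for step: the same three-term decomposition into two generalization gaps plus an excess empirical risk, the same conditional application of Theorem~\ref{thm:excess_empirical_risk_bound} with $T\to\infty$ killing the optimization term, and the same use of one-pass SGD on the public data followed by Jensen's inequality to bound $\E\|w^*-w_{\mathrm{ref}}\|$ by $O(L/(c\sqrt{N_\pub}))$, exploiting the linearity of the private bound in $\|w^*-w_{\mathrm{ref}}\|$. The only cosmetic difference is that the paper invokes the last-iterate squared-distance bound of Lacoste-Julien et al.\ directly (its Corollary~\ref{cor:sgd_public_data}), whereas you reach the same $O(L^2/(c^2 N_\pub))$ squared-distance estimate by combining their function-value guarantee with the quadratic-growth consequence of $c$-strong convexity.
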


To the best of our knowledge, Theorem~\ref{thm:provable_benefit_of_public_data_informal} is the first demonstrated claim that access to even a very small public dataset can improve private learning in the convex ERM setting.
We note that the generalization bound decomposes in a first term, which describes the generalization cost incurred by using DP, and a second part which is common to non-private learners. Interestingly, in the Appendix we show that in MixDP the cost of privacy asymptotically vanishes even if the percentage of public data becomes neglegible (i.e., when $N_\pub\rightarrow \infty$, $N_{\pub}/N\rightarrow 0$).

Besides providing a good initialization, access to a small public dataset also enables ways to further improve Noisy-GD in practice, as we will now see.

\paragraph{Adaptive threshold.} Since the norm of the gradients changes significantly during training, a fixed clipping threshold $\tau$ is suboptimal throughout training. \cite{wang2020differentially} suggests to tune this threshold adaptively using public data. However, the norm of the average gradient of the public sample, may not yield good clipping thresholds for the \textit{per-sample} gradients. Instead, we find the following adaptive threshold to be more robust:
\[
\tau = \operatorname{quantile}_{90}(\{\|g_i^\pub(w)\|\}_{i=i}^{N_\pub}),
\]
that is, we take the 90-th quantile of the norm of the public gradients, where $g_i^\pub(w) = \nabla_w \tilde{\ell}_i(w)$ is the gradient of the $i$-th public sample.
Then the clipped gradient we use is
\[
\tilde{g}_i^\pri(w) = \frac{g_i^\pri(w)}{\|g_i^\pri(w)\|} \min(\|g_i^\pri(w)\|, \tau).
\]
This quantile rule is also used by \cite{andrew2019differentially}, but we differ in the use of public data to determine $\tau$, rather than allocating part of the privacy-budget for releasing the quantile.
It was established in \cite{chen2020understanding} that gradient clipping has the effect of Huberization of the loss function in the GLM cases. The convergence to the optimal solution of a modified objective function is guaranteed if we fix the clipping parameter. Our approach on the adaptive clipping has the effect of selecting the hyperparameter adaptively in a homotopy style algorithm, which shows further empirical benefits.

\paragraph{Adaptive subspace projection.} From Thm.~\ref{thm:provable_benefit_of_public_data_informal}, the utility of a DP algorithm decreases with the number of parameters $d$ of the model. In view of this, we do not train the whole network but rather use it as a feature extractor and only train a linear model on top. However, for modern vision architectures, even the number of parameters of the final layer is in the order of tens of thousands. Therefore, we use an adaptive mechanism to project private gradients onto a low-dimensional subspace before adding noise. More precisely,
\[
G^\pub(w) = \nabla_w L_{D_\pub}(w) = \sum_{i=1}^{N_\pub} \nabla_w \ell (\tilde{z}_i, w)
\]
be the total gradient of the public dataset $D_\pub$. Since we are using a linear logistic model, $G^\pub(w)$ is a $D \times C$ matrix where $D$ is the number of features and $C$ is the number of classes.
Consider the SVD decomposition:
\[
G^\pub(w) = U S V.
\]
We now project each (clipped) private gradient matrix $\tilde{g}_i^\pri$ on the top $P$ components, and obtain a matrix $\hat{g}_i^\pri = U^t \tilde{g}_i^\pri$ of size $P \times C$.
Other methods projecting the gradient using public data \cite{zhou2020bypassing,yu2021not,kairouz2020fast} do not compute the SVD of the total gradient matrix, as noted in Sec.~\ref{sec:related}. Rather, use the vectorized \textit{per-sample} gradients of the public examples to compute the principal components to project. Since, in vision, the size of matrix of all per-sample gradients has a much larger size $N \times FC$ (rather than our $F \times C$), several approximations would be required to carry out this strategy.

\paragraph{Final algorithm.} The final gradient update step at time $t$ is:
\[
w_{t+1} \gets w_t - \eta_t ( G^\pub + U \hat{G}^\pri + \lambda \cdot w_t)
\]
where
\[
\textstyle \hat{G}^\pri = \sum_{i=1}^{N_\pri} U^T \tilde{g}^\pri_i + n_t.
\]
and $n_t \sim \cN(0, \sigma^2 \tau^2 I_C)$. We then train for $T$ steps. The pseudo-code of algorithm is given in Algorithm~\ref{alg:mix-s}.
\begin{proposition}
Algorithm~\ref{alg:mix-s}
with parameter $T,\sigma^2$ such that $\rho:= \frac{T^2}{2\sigma^2}$ satisfies the $(\epsilon,\delta(\epsilon))$-DP of a Gaussian mechanism in Theorem 8 \cite{balle2018improving} with parameter $\mu = \sqrt{2\rho}$.
\end{proposition}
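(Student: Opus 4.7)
The plan is to view Algorithm~\ref{alg:mix-s} as an adaptive composition of $T$ Gaussian mechanisms on the private dataset $D_\pri$, then to invoke Gaussian Differential Privacy composition together with the analytical $(\epsilon,\delta)$ conversion of Theorem~8 in \cite{balle2018improving}.

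First I would pinpoint the private-data channel. The only quantity depending on $D_\pri$ at iteration $t$ is the noisy projected sum $\hat{G}^{\pri}_t = \sum_{i=1}^{N_\pri} U_t^\top \tilde{g}^{\pri}_i(w_t) + n_t$, which is then post-processed deterministically into $w_{t+1}$. Both the adaptive clipping threshold $\tau_t = \operatorname{quantile}_{90}\bigl(\{\|g_j^\pub(w_t)\|\}_j\bigr)$ and the projection matrix $U_t$ (top-$P$ left singular vectors of $G^\pub(w_t)$) are functions of $w_t$ and of the public data only, so conditional on $w_t$ they are deterministic with respect to $D_\pri$ and may be treated as public side information.

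Next I would bound the per-step $L_2$ sensitivity. For add/remove-neighbors $D_\pri \simeq D_\pri'$ differing in a single sample $z^*$, the two values of $\sum_i U_t^\top \tilde{g}^{\pri}_i$ differ by exactly $U_t^\top \tilde{g}^{\pri}_{z^*}$, whose Frobenius norm is at most $\tau_t$, since $U_t$ has orthonormal columns and $\tilde{g}^{\pri}_{z^*}$ is clipped to $\tau_t$. Combined with the isotropic Gaussian noise $n_t$ of standard deviation $\sigma\tau_t$, each iteration is a Gaussian mechanism whose privacy is governed by the sensitivity-to-noise ratio $1/\sigma$, independently of the realized $\tau_t$ and $U_t$. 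Aggregating across the $T$ iterations via the Gaussian-DP composition of \cite{dong2021gaussian} then yields a single Gaussian with parameter $\mu = \sqrt{2\rho}$ for the $\rho$ stated in the proposition; plugging this $\mu$ into Theorem~8 of \cite{balle2018improving} produces the tight analytical $(\epsilon,\delta(\epsilon))$ bound.

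The main obstacle I expect is rigorously handling the \emph{adaptivity}: $\tau_t$, $U_t$, and $w_t$ are themselves built from previously released noisy gradients and hence depend on $D_\pri$. This is resolved by noting that the per-step privacy parameter is the worst-case ratio $1/\sigma$, which does not depend on the realized values of $\tau_t$ or $U_t$, so the standard adaptive composition theorem applies without modification. A minor check is that the Gaussian noise $n_t$ is added in the same reduced coordinate system as the projected gradients $U_t^\top \tilde{g}^{\pri}_i$, so that the Frobenius-norm sensitivity bound exactly matches the noise covariance.
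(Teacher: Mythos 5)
Your proof is correct and follows essentially the same route as the paper: the appendix establishes this by viewing the algorithm as an adaptive composition of $T$ Gaussian mechanisms, each with sensitivity $\tau_t$ and noise scale $\sigma\tau_t$, and invoking the GDP composition lemma of \cite{dong2021gaussian} together with the analytical conversion of \cite{balle2018improving}; you supply the same argument with the details (orthonormality of $U_t$, the fact that $\tau_t$ and $U_t$ depend only on public data given $w_t$, and the worst-case per-step ratio being data-independent) spelled out more explicitly than the paper does. Note only that your per-step ratio $1/\sigma$ yields $\mu=\sqrt{T}/\sigma$, i.e.\ $\rho = T/(2\sigma^2)$, which indicates that the $T^2$ appearing in the proposition's statement is a typo for $T$.
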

Note that the privacy guarantee is the same as long as the ratio $\sigma/\sqrt{T}$ remains constant, which gives a degree of freedom in choosing $\sigma$ (hence $T$). %
Higher values of $\sigma$ requires more training steps $T$, but, by Thm.~\ref{thm:excess_empirical_risk_bound}, it also ensures better convergence (which we validate this empirically in Sec.~\ref{sec:experiments}). Hence, the user should generally pick the largest $\sigma$ allowed by their computational budget.

\begin{algorithm}[t]
\caption{AdaMix training algorithm.
}\label{alg:mix-s}
\KwData{Public dataset $D_\pub$, private dataset $D_\pri$, privacy parameter $(\epsilon, \delta)$, noise variance $\sigma$, learning rate $\eta$, $L_2$ reg. coefficient $\lambda$}
\KwResult{$w = A(S)$}
$T_\text{max} \gets \operatorname{Calibrate}(\epsilon, \delta, \sigma)$\;
$w \gets \operatorname{Pretrain}(S_u)$\;
\For{$t = 1,\ldots,T_\text{max}$}{
\Comment{Adaptive clipping}
$\tau_t \gets \operatorname{quantile}_{90}\big(\{\|g_i^\pub(w)\|\}_{i=1}^{N_\pub}\big) $\;
\Comment{Adaptive projection}
$U, S, V \gets \operatorname{SVD}(G_\text{public}(w))$\;
\Comment{Clip, project and add noise to private gradients}
$n_t \sim N(0, \sigma^2 \tau_t^2 I)$\;
$\tilde{g}^\pri_i(w) \gets \frac{g_i^\pri(w)}{\|g_i^\pri(w)\|} \min(\tau_t, \|g_i^\pri(w)\|)$\;
$\tilde{G}^\pri(w) = \sum_{i=1}^{N_\pri} U^T \tilde{g}^\pri_i(w) + n_t$\;
\Comment{Final weights update}
$w \gets w - \eta (G^\pub(w) + U \tilde{G}^\pri(w) + \lambda \cdot w)$\;
}
\end{algorithm}

\begin{table*}
\centering
\small
\setlength{\tabcolsep}{2pt}
\begin{tabular}{|lc|c|c|ccc|ccc|}
\hline
& Public Shots & Non-Private & Only-Public & Fully-Private & PPGD\cite{wang2020differentially} & AdaMix & Fully-Private & PPGD\cite{wang2020differentially} & AdaMix\\
& & (paragon) & (baseline) &\multicolumn{3}{c|}{$\epsilon=1$} &\multicolumn{3}{c|}{$\epsilon=3$} \\
\hline
Ox.\ Flowers \cite{Nilsback06}\!\! & 2 & 12.34 & 40.16 & 95.12 & 40.99 & \textbf{39.48} & 81.42 & 39.22 & \textbf{36.11} \\
CUB-200 \cite{WelinderEtal2010} & 2 & 31.97 & 64.01 & 96.79 & 64.20 & \textbf{63.58} & 81.61 & 61.59 & \textbf{59.15}\\
Stanf.\ Dogs \cite{KhoslaYaoJayadevaprakashFeiFei_FGVC2011}\!\! & 2 & 10.08 & 16.58 & 33.87 & 16.42 & \textbf{15.38} & 15.93 & 15.61 & \textbf{12.72}\\
MIT-67 \cite{quattoni2009recognizing} & 5 & 25.17 & 43.58 & 69.55 & 42.02 & \textbf{40.46} & 42.99 & 39.60 & \textbf{33.03}\\
Oxford Pets \cite{parkhi12a} & 5 & 7.17 & 12.83 & 26.02 & 12.85 & \textbf{11.54} & 12.37 & 11.15 & \textbf{9.53} \\
Caltech-256 \cite{griffin2019caltech} & 5 & 14.64 & 24.88 & 60.72 & 24.61 & \textbf{23.74} & 27.15 & 23.71 & \textbf{20.85}\\
\hline
\end{tabular}
\caption{\textbf{Mixed privacy, full privacy and AdaMix.}
We report the result (test errors) of different methods on a diverse set of vision tasks (see text for description). Surprisingly \textbf{Only-Public}, which throws away private data and only trains a model on the small amount of public data outperforms \textbf{Fully-Private}, which trains on all data as if it was private. \textbf{AdaMix} is instead able to effectively use both the private and public data at the same time, and achieves a significant improvement even at relatively small value of $\epsilon$ (e.g., 10\% improvement on MIT-67 w.r.t.\ Only-Public). Instead, the closest method to us, \textbf{PPGD} does not significantly improve over the Only-Public baseline.
}
\label{table:mixdp}
\end{table*}

\subsection{Textual side information}
\label{sec:multi-modal}

CLIP \cite{radford2021learning} is an embedding trained so that the representation of an image is close to the representation of its textual description.
This enables zero-shot learning, i.e., initializing an high-performance image classification model using only the name of the labels or a textual description of the classes, which is generally public.
AdaMix can be easily adapted to work in this multi-modal MixDP setting, where the public data comes from a different modality. First, we initialize a linear logistic model using the names of the classes: Let $c_i$ be the text associated to the $i$-th class, the weight matrix is \cite{radford2021learning}:
\[
W = \beta [\enc(c_1), \ldots, \enc(c_C)],
\]
that is, each row is the normalized encoding of a label scaled by a parameter $\beta$ (we use $\beta=100$).
Starting from this public initialization, we train with AdaMix on image data.

\section{Results}
\label{sec:experiments}
\vspace{-5pt}
\paragraph{Setting.} Unless otherwise specified, we train a linear logistic model on top of the last layer features of a ResNet-50 \cite{he2016deep} pre-trained on ImageNet \cite{deng2009imagenet}. For NoisyGD, we initalize the logistic weights to zero, fix $\sigma=20$ and $\delta=10^{-5}$. We train with $\lambda=1e-2$, weight decay centered at zero, learning rate $\eta \in \{0.0005,0.001,0.0025,0.005\}$, and select the best learning rate for each value of $\epsilon$. For AdaMix, we initialize the weights by training on the public data until convergence. We use the Gaussian mechanism implementation of \texttt{autodp}\footnote{\url{https://github.com/yuxiangw/autodp}} to calibrate the parameters of the training algorithm, in particular $T_{\max}$, in order to achieve the desired $(\epsilon, \delta)$-DP requirement.

\paragraph{Datasets.} We test our algorithm on the following vision classification datasets commonly used in transfer learning (see Tab.~\ref{table:mixdp}).
We split each training dataset into a ``public'' dataset consisting of $N$-samples per class (see Tab.~\ref{table:mixdp}), and consider the remaining training data as private. In all cases, the public data is less than $10\%$ of the private data.

\paragraph{Baselines considered.}
We compare AdaMix with the following baselines:
\textbf{(Fully-Private)} Ignore the MixDP setting and simply train on all data as if it was private, starting form a zero initialization and using NoisyGD;
\textbf{(Only-Public)} Train a standard non-private model using only the few public samples and discarding the private ones;
\textbf{(PPGD)} A version of PPSGD \cite{wang2020differentially}, but trained with NoisyGD rather than DP-SGD for easier comparison. Unlike AdaMix, PPGD has several hyper-parameters: we manually searched for the best combination to ensure the best comparison;
\textbf{(NGD)} An ablated version of AdaMix which uses fixed hyper-parameters rather than the adaptive components.

\subsection{Experiments}
\vspace{-5pt}
\paragraph{Mixed Privacy vs Full Privacy.}
In \Cref{table:mixdp} we see that, perhaps surprisingly, Only-Public is always significantly better than Fully-Private even if it actually throws away most of the available data. This suggests that, in vision, collecting a small amount of public data may be preferred to collecting a much larger amount of private data (Fig.~\ref{fig:boost}). However, we see that AdaMix is able to effectively use both the private and public data at the same time, and achieves a significant improvement even at relatively small value of $\epsilon$ (e.g., 10\% improvement on MIT-67 w.r.t.\ Only-Public at $\epsilon=3$). Conversely, the method closest to us, PPGD, does not significantly improve over the Only-Public baseline even after our changes to adapt it better to vision tasks.

\begin{figure*}
\centering
\includegraphics[width=.99\linewidth]{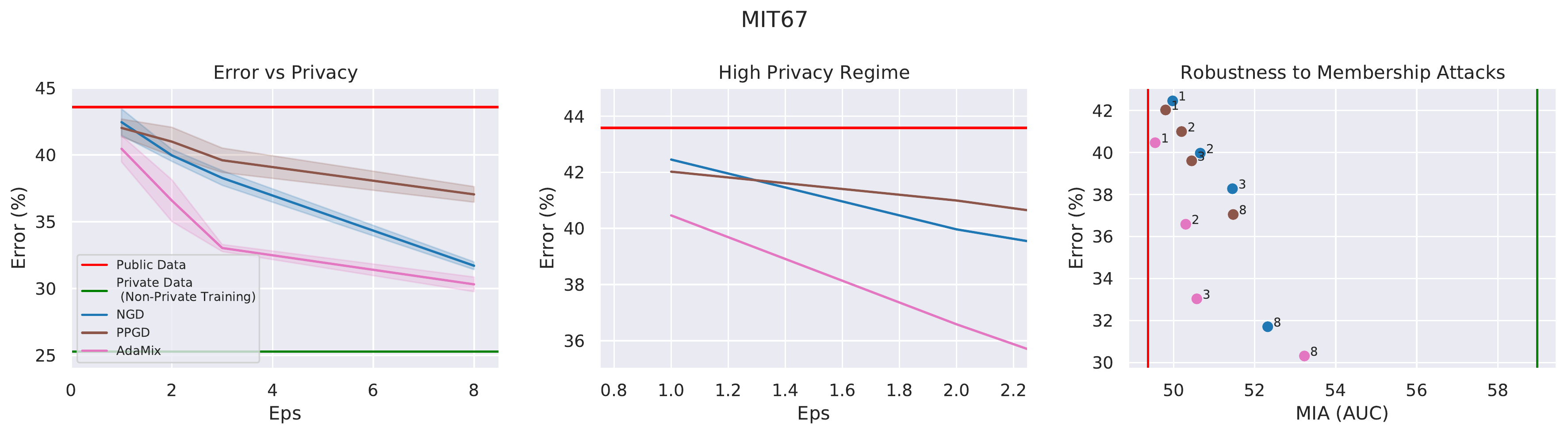}
\caption{\textbf{Test error vs privacy for several methods.} \textbf{(Left)}
On MIT-67, we show the test error obtained by different methods at different privacy parameters $\epsilon$. The top horizontal red line shows the perfectly private Only-Public baseline (not using the private data at all). The bottom green line shows the non-private paragon (training on all data as if they where public). By changing the privacy parameter $\epsilon$, we can interpolate between the two extrema.
We see that AdaMix
performs uniformly better than other methods, and can use private and public data together to obtain significant boost at all $\epsilon$.
\textbf{(Center)} Same as the before, but we zoom in to show the behavior of the different solutions in the low-$\epsilon$ regime.
\textbf{(Right)} For the same algorithms, we plot the robustness to adversarial attacks (measured by AUC) vs test error obtained using different values of $\epsilon$ (annotated near each point). AdaMix obtains the best trade-off between accuracy and robustness.
}
\label{fig:comparison}
\end{figure*}

\paragraph{Ablation study and finer comparison.} In Fig.~\ref{fig:comparison} (left and center) we plot the average accuracy obtained by various methods and ablations of AdaMix for different privacy parameters $\epsilon$. Both AdaMix and the ablated NGD are able to improve over the Only-Public baseline. Interestingly, this holds true even when considering relatively low privacy parameters $\epsilon \approx 1$ which is usually considered too restrictive to train in a fully private setting (as seen in Tab.~\ref{table:mixdp}). On the other hand, thanks to the MixDP training we can obtain high-accuracy models even at restrictive privacy settings. We note that AdaMix performs uniformly better than other methods at all privacy regimes. From the ablation of AdaMix, we see that initializing the model with the public data has the largest impact, followed by adaptive clipping and adaptive projection.

\paragraph{Membership attacks.} Differential privacy is a very strong requirement, and is it often assumed that using higher value of $\epsilon$ may still provide valuable defense to attacks \cite{shokri2017membership,carlini2019secret,carlini2020extracting,rahman2018membership,bernau2019assessing,zhu2020deep,sablayrolles2019white,lecuyer2019certified,ma2019data} even if the theoretical guarantees are weak. Conversely, \cite{jagielski2020auditing} suggests using attacks to give a lower-bound on the privacy parameter of the algorithm. In Fig.~\ref{fig:comparison} (right) we plot the Area Under Curve of a membership attack executed against models for different values of $\epsilon$.
We use a thresholding based membership attack as proposed in \cite{sablayrolles2019white}. The loss for samples in the private training set are labelled as 0 and test samples are labelled as 1. Then we simply compute the AUC for this. We find that the sorting of the algorithms does not change: algorithms with better theoretical bounds are also more robust to attacks.

\paragraph{Performance Boost and size of public data.}
We hypothesize that, when enough public data is present, the contribution of using private data may be less noticeable. To test this, we define the Performance Boost of a Mixed Privacy method as $\text{PB} = (\text{err}_\text{public} - \text{err}_\text{paragon}) / (\text{err}_\text{DP} - \text{err}_\text{paragon})$, where $\text{err}_\text{paragon}$ is the error obtained by training on non-private model on both public and private data, $\text{err}_\text{public}$ is the error obtained by training only on the public data, and $\text{err}_\text{DP}$ is the error obtained by using training a mixed privacy model. This measures how well we can learn from the private data in the MixDP setting compared to the non-private upper-bound. In Fig.~\ref{fig:boost} we plot the performance boost for different sizes of the public set. We observe that the performance boost derived of using mixed privacy is very high when the public data is scarce, disappears when the amount of public data is comparable or larger than the private data.

\paragraph{Multi-modal initialization.}
In Fig.~\ref{fig:multimodal} we compare CLIP models initialized with (1) a random initialization, (2) a zero-shot initialization based on the public label names (Sec.~\ref{sec:multi-modal}), and (3) using few public image samples. While the latter performs the best at a given privacy level $\epsilon$, the zero-shot initialization still improves significantly over the random initialization, thus showing that even just small amount of textual information (which is often available) can be used to significantly boost accuracy.
Multi-modal models may also learn a better lower-dimensional representation of the data that is more amenable to DP. In Fig.~\ref{fig:multimodal} we compare AdaMix using an ImageNet pretrained ResNet-50 and using CLIP. This is not a direct comparison, but we see a significant performance boost by using CLIP at a given privacy level $\epsilon$, suggesting that the move to multi-modal model may be a logical next step for DP.

\begin{figure}[b]
\centering
\includegraphics[width=.98\linewidth]{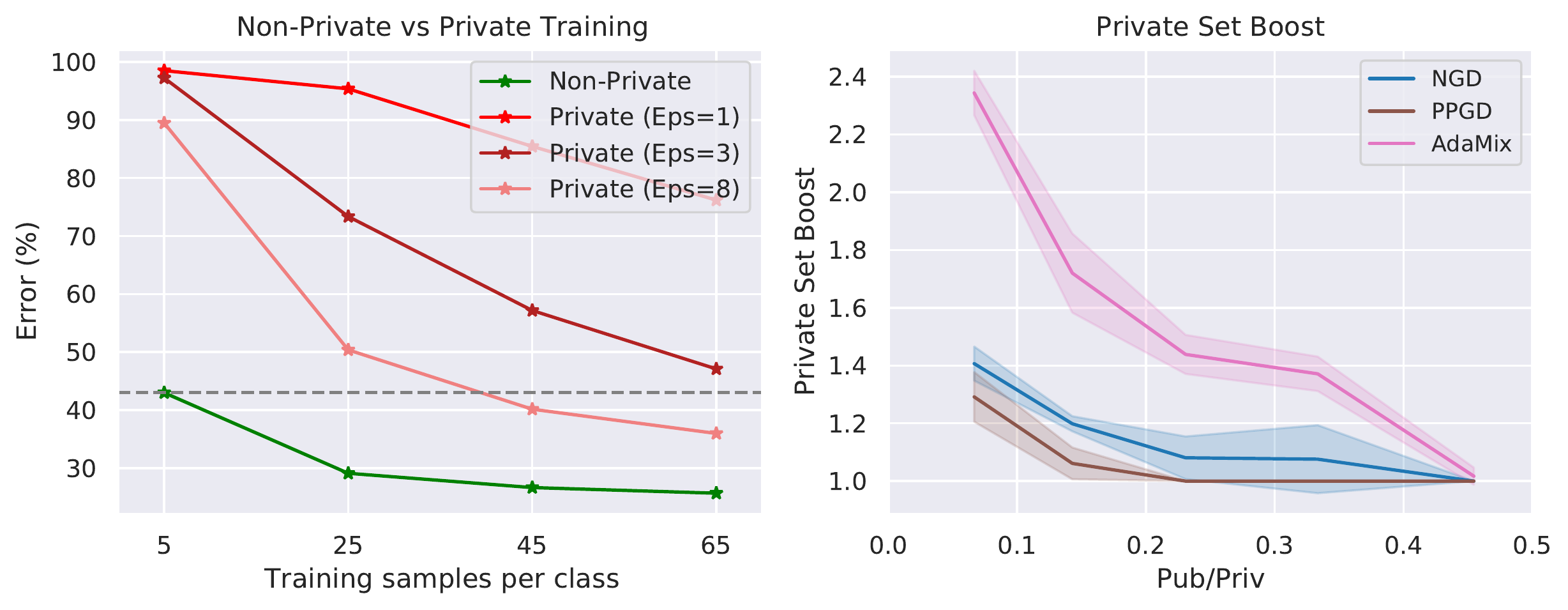}
\caption{\textbf{(Left) Trade-off between public and private examples.} We show the accuracy reached on MIT-67 using $N$ samples per class (x-axis) when they are public (green) or private (in red for different values of $\epsilon$). Fully private training requires as much as 10 times more samples. This leaves users with a trade-off between collecting more private data or, if possible, a smaller amount of public data.
\textbf{(Right) Performance boost using a private set for different methods.} We plot the Performance Boost (PB) for different DP methods as the ratio of public to private samples changes (see text for definition). We see that AdaMix achieves the best performance boost. As expected, when the public set is relatively large, the boost from adding private data decreases.%
}
\label{fig:boost}
\end{figure}

\begin{figure*}[t]
\centering
\includegraphics[width=0.9\linewidth]{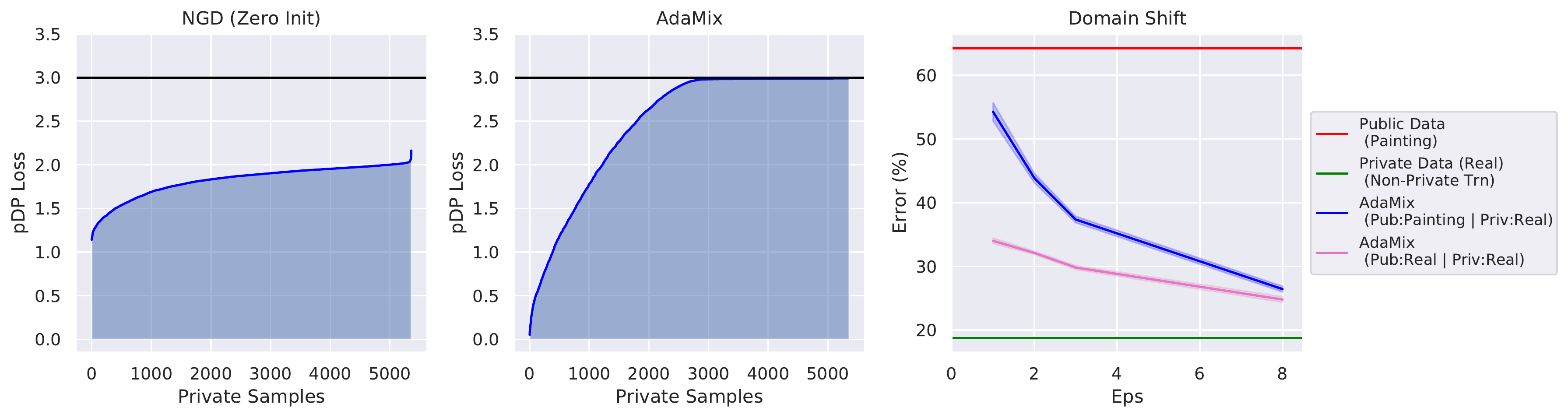}
\caption{\textbf{(Left and center) Effect of public data on per-instance DP.} For $\epsilon=3$, we plot the sorted pDP scores for each sample in the dataset with Fully-Private training \textbf{(left)} and with AdaMix \textbf{(center)}. AdaMix makes the the target $\epsilon$ (horizontal line) closer to pDP value used by most samples (blue curve).
\textbf{(Right) MixDP and domain-shifts.}  We plot the performance of AdaMix when (purple curve) the public data come from the same domain as the test (real images), and (blue curve) from different domains (public images are paintings). We see that AdaMix use the private data from the target domain (real images) to adapt the solution to the target task. This significantly improves over using only the public data from the wrong domain (red line) and for large $\epsilon$ approaches the non-private limit (green line).
}
\label{fig:individual}
\end{figure*}

\paragraph{Effect of differential privacy on individual samples.}
DP is a worst-case guarantee which may penalize the accuracy of a model on unusual data, or data on the long-tails of the distribution, which however may play an important role for generalization in vision \cite{feldman2020does,feldman2020neural}. MixDP may ease the problem, since it allows to collect public data to ensure that each sub-population is sufficiently covered, lessening the accuracy cost of making that data private. We show this effect using the per-instance DP (pDP) analysis of \cite{wang2019per}, which measures the privacy loss $\epsilon'$ incurred by each individual $z$ during the private training. This allows us to provide a finer analysis of different methods by comparing the histogram of pDP loss $\epsilon$ for each individual in the dataset $D$ on top of the worst case DP bounds, which we do in Fig.~\ref{fig:individual} (see Appendix for details and theoretical analysis).
We observe that, when using AdaMix, the pDP losses of most samples are the same, and is close to the maximum privacy budget $\epsilon=3$,  indicating that we are using the prescribed privacy budget more effectively and more uniformly across samples, thereby improving utility.

\paragraph{Domain shift.} Often, the private and public data may come from slightly different domains. To test the robustness of AdaMix to domain shifts, we use the DomainNet dataset, which aims to classify objects in different domains. In particular, we train a model to classify objects in painting, using however public data coming from real pictures (while the private data is in the painting domain). In Fig.~\ref{fig:individual} (right), we show that, even if the public comes from a different domain, AdaMix is still able to use the public data to significantly improve performance of private training.

\paragraph{Effect of $\sigma$.} Outside of the learning rate $\eta$, the main other sensitive hyper-parameter of AdaMix is the noise variance $\sigma$. From Prop.~\ref{thm:excess_empirical_risk_bound} we expect that the best accuracy will be obtained using a large $\sigma$ and training for a respectively larger amount of epochs $T$. In the Appendix, we plot the test error as a function of $\sigma$ and we see that indeed, a larger $\sigma$ give better results at the expense of longer training time.

\vspace{-8pt}
\section{Discussion}
\vspace{-1pt}
We study Mixed Differential Privacy learning in computer vision, and show that in this setting differential privacy can be used to provide high accuracy models while respecting a given privacy parameter. To do this, we introduce AdaMix, an adaptive differential privacy mechanism that uses the public data to initialize and compute, at each step, an optimal subspace and clipping threshold. We also show that multi-modal vision and text can significantly improve accuracy under a privacy constrain.

\paragraph{Limitations.} In our analysis we train a linear model on pre-trained features, rather than fine-tuning the whole network. This is common in DP, since fine-tuning multiple layers may not significantly improve the accuracy while the additional parameters negatively affects the privacy bounds. This limitation is partially offset by new models that are trained to have generic and transferable last-layer features without the need of additional fine-tuning. We discuss several theoretical bounds that guide us in understanding the behavior of the algorithm. However, these bounds refer to a slighlty simplified version of AdaMix (see Appendix for details) and use hyper-parameters that are asymptotically optimal, but may not be practical.

\paragraph{Conclusions.} Differentially Private models in computer vision often cannot reach the same level of performance as non-private model for realistic privacy parameters. Using AdaMix in the MixDP learning setting, we have shown that, assuming the presence of a small amount of public data, accurate networks can be produced that reach better accuracies than fully private training without compromising the privacy of the data. We hope that further research in this field may help the adoption of private models in more areas of computer vision.
\vspace{-1cm}
\begin{figure}[b]
\centering
\includegraphics[width=.98\linewidth]{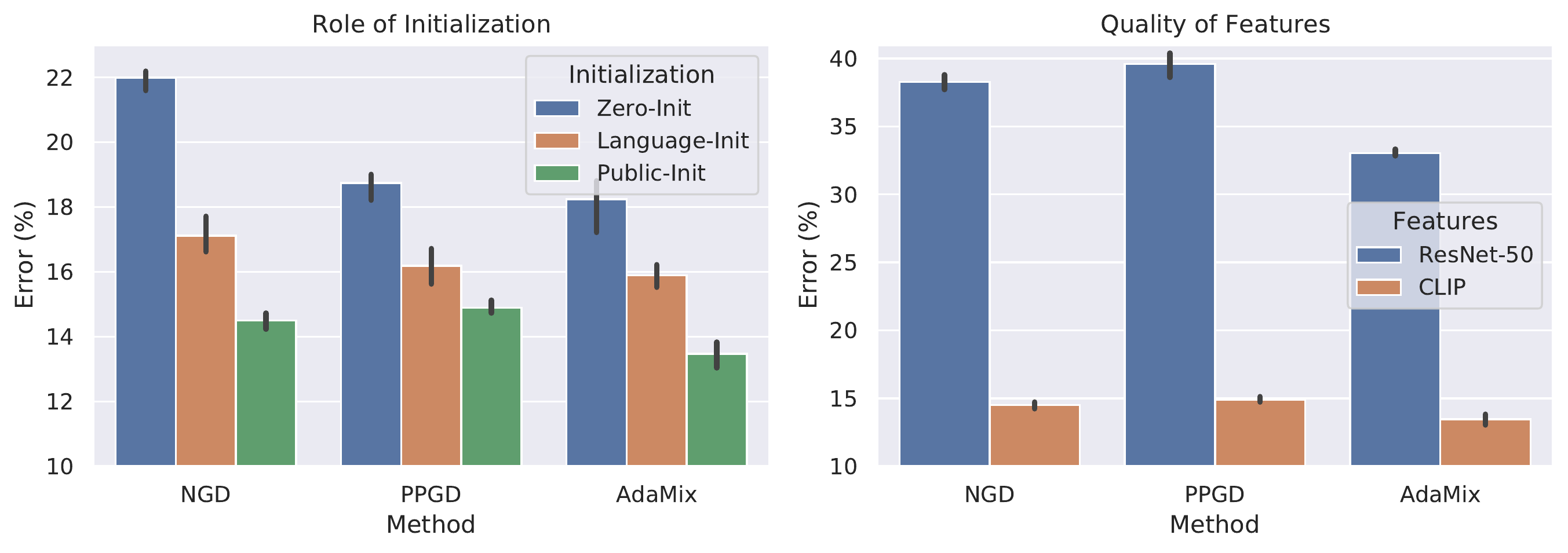}
\caption{\textbf{(Left) Multi-modal initialization is better than zero initialization.} We compare the test accuracy on MIT-67 of a CLIP model trained with $\epsilon=3$ using different initialization strategies. We observe that initializing the logistic weights to zero leads to the worst results under a given privacy parameter. Using the public label names to initialize the weights (language initialization) performs significantly better. However, using the few available public images to initialize the weights still outperforms the other choices.
\textbf{(Right) Multi-modal models.} We compare the performance of ResNet-50 model, and a CLIP model. We see that CLIP performs significantly better under a given privacy parameter. While the ResNet-50 model and the CLIP model are not directly comparable due to different pre-training, this further reinforces that stronger pre-training can indeed benefit the privacy setting and that the feature space learned by multi-modal models may be better suited for privacy than standard vision networks. }
\label{fig:multimodal}
\end{figure}

\clearpage

{\small
\bibliographystyle{ieee_fullname}
\bibliography{egbib,noisygd_refs}

\begin{thebibliography}{10}\itemsep=-1pt

\bibitem{achille2021lqf}
Alessandro Achille, Aditya Golatkar, Avinash Ravichandran, Marzia Polito, and
  Stefano Soatto.
\newblock Lqf: Linear quadratic fine-tuning.
\newblock In {\em IEEE/CVF Conference on Computer Vision and Pattern
  Recognition}, pages 15729--15739, 2021.

\bibitem{alon2019limits}
Noga Alon, Raef Bassily, and Shay Moran.
\newblock Limits of private learning with access to public data.
\newblock In {\em Advances in Neural Information Processing Systems
  (NeurIPS'19)}, pages 10342--10352, 2019.

\bibitem{andrew2019differentially}
Galen Andrew, Om Thakkar, H~Brendan McMahan, and Swaroop Ramaswamy.
\newblock Differentially private learning with adaptive clipping.
\newblock In {\em Advances in Neural Information Processing Systems
  (NeurIPS'21)}, 2021.

\bibitem{bagdasaryan2019differential}
Eugene Bagdasaryan, Omid Poursaeed, and Vitaly Shmatikov.
\newblock Differential privacy has disparate impact on model accuracy.
\newblock {\em Advances in Neural Information Processing Systems},
  32:15479--15488, 2019.

\bibitem{balle2018improving}
Borja Balle and Yu-Xiang Wang.
\newblock Improving the gaussian mechanism for differential privacy: Analytical
  calibration and optimal denoising.
\newblock In {\em International Conference on Machine Learning}, pages
  394--403. PMLR, 2018.

\bibitem{bassily2014private}
Raef Bassily, Adam Smith, and Abhradeep Thakurta.
\newblock Private empirical risk minimization: Efficient algorithms and tight
  error bounds.
\newblock In {\em Annual Symposium on Foundations of Computer Science}, pages
  464--473. IEEE, 2014.

\bibitem{bernau2019assessing}
Daniel Bernau, Philip-William Grassal, Jonas Robl, and Florian Kerschbaum.
\newblock Assessing differentially private deep learning with membership
  inference.
\newblock {\em arXiv preprint arXiv:1912.11328}, 2019.

\bibitem{carlini2019secret}
Nicholas Carlini, Chang Liu, {\'U}lfar Erlingsson, Jernej Kos, and Dawn Song.
\newblock The secret sharer: Evaluating and testing unintended memorization in
  neural networks.
\newblock In {\em 28th USENIX Security Symposium (USENIX Security 19)}, pages
  267--284, 2019.

\bibitem{carlini2020extracting}
Nicholas Carlini, Florian Tramer, Eric Wallace, Matthew Jagielski, Ariel
  Herbert-Voss, Katherine Lee, Adam Roberts, Tom Brown, Dawn Song, Ulfar
  Erlingsson, Alina Oprea, and Colin Raffel.
\newblock Extracting training data from large language models.
\newblock In {\em USENIX Security Symposium}, 2021.

\bibitem{chaudhuri2008privacy}
Kamalika Chaudhuri and Claire Monteleoni.
\newblock Privacy-preserving logistic regression.
\newblock In {\em NIPS}, volume~8, pages 289--296. Citeseer, 2008.

\bibitem{chaudhuri2011differentially}
Kamalika Chaudhuri, Claire Monteleoni, and Anand~D Sarwate.
\newblock Differentially private empirical risk minimization.
\newblock {\em Journal of Machine Learning Research}, 12(3), 2011.

\bibitem{chen2020understanding}
Xiangyi Chen, Steven~Z Wu, and Mingyi Hong.
\newblock Understanding gradient clipping in private sgd: a geometric
  perspective.
\newblock {\em Advances in Neural Information Processing Systems}, 33, 2020.

\bibitem{deng2009imagenet}
Jia Deng, Wei Dong, Richard Socher, Li-Jia Li, Kai Li, and Li Fei-Fei.
\newblock Imagenet: A large-scale hierarchical image database.
\newblock In {\em 2009 IEEE conference on computer vision and pattern
  recognition}, pages 248--255. Ieee, 2009.

\bibitem{dong2021gaussian}
Jinshuo Dong, Aaron Roth, and Weijie Su.
\newblock Gaussian differential privacy.
\newblock {\em Journal of the Royal Statistical Society}, 2021.

\bibitem{dwork2006calibrating}
Cynthia Dwork, Frank McSherry, Kobbi Nissim, and Adam Smith.
\newblock Calibrating noise to sensitivity in private data analysis.
\newblock In {\em Theory of cryptography conference}, pages 265--284. Springer,
  2006.

\bibitem{dwork2014algorithmic}
Cynthia Dwork, Aaron Roth, et~al.
\newblock The algorithmic foundations of differential privacy.
\newblock {\em Found. Trends Theor. Comput. Sci.}, 9(3-4):211--407, 2014.

\bibitem{feldman2020does}
Vitaly Feldman.
\newblock Does learning require memorization? a short tale about a long tail.
\newblock In {\em Proceedings of the 52nd Annual ACM SIGACT Symposium on Theory
  of Computing}, pages 954--959, 2020.

\bibitem{feldman2020neural}
Vitaly Feldman and Chiyuan Zhang.
\newblock What neural networks memorize and why: Discovering the long tail via
  influence estimation.
\newblock In H. Larochelle, M. Ranzato, R. Hadsell, M.~F. Balcan, and H. Lin,
  editors, {\em Advances in Neural Information Processing Systems}, volume~33,
  pages 2881--2891. Curran Associates, Inc., 2020.

\bibitem{frome2013devise}
Andrea Frome, Greg Corrado, Jonathon Shlens, Samy Bengio, Jeffrey Dean,
  Marc’Aurelio Ranzato, and Tomas Mikolov.
\newblock Devise: A deep visual-semantic embedding model.
\newblock 2013.

\bibitem{griffin2019caltech}
G. Griffin, AD. Holub, and Pietro Perona.
\newblock The caltech 256.

\bibitem{he2016deep}
Kaiming He, Xiangyu Zhang, Shaoqing Ren, and Jian Sun.
\newblock Deep residual learning for image recognition.
\newblock In {\em Proceedings of the IEEE conference on computer vision and
  pattern recognition}, pages 770--778, 2016.

\bibitem{iyengar2019towards}
Roger Iyengar, Joseph~P Near, Dawn Song, Om Thakkar, Abhradeep Thakurta, and
  Lun Wang.
\newblock Towards practical differentially private convex optimization.
\newblock In {\em 2019 IEEE Symposium on Security and Privacy (SP)}, pages
  299--316. IEEE, 2019.

\bibitem{jagielski2020auditing}
Matthew Jagielski, Jonathan Ullman, and Alina Oprea.
\newblock Auditing differentially private machine learning: How private is
  private sgd?
\newblock In H. Larochelle, M. Ranzato, R. Hadsell, M.~F. Balcan, and H. Lin,
  editors, {\em Advances in Neural Information Processing Systems}, volume~33,
  pages 22205--22216. Curran Associates, Inc., 2020.

\bibitem{jayaraman2018distributed}
Bargav Jayaraman and Lingxiao Wang.
\newblock Distributed learning without distress: Privacy-preserving empirical
  risk minimization.
\newblock {\em Advances in Neural Information Processing Systems}, 2018.

\bibitem{kairouz2020fast}
Peter Kairouz, M{\'o}nica Ribero, Keith Rush, and Abhradeep Thakurta.
\newblock Fast dimension independent private adagrad on publicly estimated
  subspaces.
\newblock {\em arXiv preprint arXiv:2008.06570}, 2020.

\bibitem{KhoslaYaoJayadevaprakashFeiFei_FGVC2011}
Aditya Khosla, Nityananda Jayadevaprakash, Bangpeng Yao, and Li Fei-Fei.
\newblock Novel dataset for fine-grained image categorization.
\newblock In {\em First Workshop on Fine-Grained Visual Categorization, IEEE
  Conference on Computer Vision and Pattern Recognition}, Colorado Springs, CO,
  June 2011.

\bibitem{koren2015fast}
Tomer Koren and Kfir~Y Levy.
\newblock Fast rates for exp-concave empirical risk minimization.
\newblock In {\em Neural Information Processing Systems}, pages 1477--1485,
  2015.

\bibitem{lacoste2012simpler}
Simon Lacoste-Julien, Mark Schmidt, and Francis Bach.
\newblock A simpler approach to obtaining an o (1/t) convergence rate for the
  projected stochastic subgradient method.
\newblock {\em arXiv preprint arXiv:1212.2002}, 2012.

\bibitem{lampert2009learning}
Christoph~H Lampert, Hannes Nickisch, and Stefan Harmeling.
\newblock Learning to detect unseen object classes by between-class attribute
  transfer.
\newblock In {\em 2009 IEEE Conference on Computer Vision and Pattern
  Recognition}, pages 951--958. IEEE, 2009.

\bibitem{larochelle2008zero}
Hugo Larochelle, Dumitru Erhan, and Yoshua Bengio.
\newblock Zero-data learning of new tasks.
\newblock In {\em AAAI}, volume~1, page~3, 2008.

\bibitem{lecuyer2019certified}
Mathias Lecuyer, Vaggelis Atlidakis, Roxana Geambasu, Daniel Hsu, and Suman
  Jana.
\newblock Certified robustness to adversarial examples with differential
  privacy.
\newblock In {\em 2019 IEEE Symposium on Security and Privacy (SP)}, pages
  656--672. IEEE, 2019.

\bibitem{lee2018concentrated}
Jaewoo Lee and Daniel Kifer.
\newblock Concentrated differentially private gradient descent with adaptive
  per-iteration privacy budget.
\newblock In {\em Proceedings of the 24th ACM SIGKDD International Conference
  on Knowledge Discovery \& Data Mining}, pages 1656--1665, 2018.

\bibitem{lei2015predicting}
Jimmy Lei~Ba, Kevin Swersky, Sanja Fidler, et~al.
\newblock Predicting deep zero-shot convolutional neural networks using textual
  descriptions.
\newblock In {\em Proceedings of the IEEE International Conference on Computer
  Vision}, pages 4247--4255, 2015.

\bibitem{li2021large}
Xuechen Li, Florian Tram{\`e}r, Percy Liang, and Tatsunori Hashimoto.
\newblock Large language models can be strong differentially private learners.
\newblock {\em arXiv preprint arXiv:2110.05679}, 2021.

\bibitem{ma2019data}
Yuzhe Ma, Xiaojin Zhu, and Justin Hsu.
\newblock Data poisoning against differentially-private learners: Attacks and
  defenses.
\newblock In {\em Proceedings of the Twenty-Eighth International Joint
  Conference on Artificial Intelligence, {IJCAI-19}}, pages 4732--4738.
  International Joint Conferences on Artificial Intelligence Organization, 7
  2019.

\bibitem{Nilsback06}
Maria-Elena Nilsback and Andrew Zisserman.
\newblock A visual vocabulary for flower classification.
\newblock In {\em IEEE Conference on Computer Vision and Pattern Recognition},
  volume~2, pages 1447--1454, 2006.

\bibitem{papernot2016semi}
Nicolas Papernot, Martín Abadi, Úlfar Erlingsson, Ian Goodfellow, and Kunal
  Talwar.
\newblock Semi-supervised knowledge transfer for deep learning from private
  training data.
\newblock In {\em Proceedings of the International Conference on Learning
  Representations}, 2017.

\bibitem{papernot2018scalable}
Nicolas Papernot, Shuang Song, Ilya Mironov, Ananth Raghunathan, Kunal Talwar,
  and Ulfar Erlingsson.
\newblock Scalable private learning with {PATE}.
\newblock In {\em International Conference on Learning Representations}, 2018.

\bibitem{parkhi12a}
Omkar~M. Parkhi, Andrea Vedaldi, Andrew Zisserman, and C.~V. Jawahar.
\newblock Cats and dogs.
\newblock In {\em IEEE Conference on Computer Vision and Pattern Recognition},
  2012.

\bibitem{quattoni2009recognizing}
Ariadna Quattoni and Antonio Torralba.
\newblock Recognizing indoor scenes.
\newblock In {\em 2009 IEEE Conference on Computer Vision and Pattern
  Recognition}, pages 413--420. IEEE, 2009.

\bibitem{radford2021learning}
Alec Radford, Jong~Wook Kim, Chris Hallacy, Aditya Ramesh, Gabriel Goh,
  Sandhini Agarwal, Girish Sastry, Amanda Askell, Pamela Mishkin, Jack Clark,
  et~al.
\newblock Learning transferable visual models from natural language
  supervision.
\newblock {\em arXiv preprint arXiv:2103.00020}, 2021.

\bibitem{rahman2018membership}
Md~Atiqur Rahman, Tanzila Rahman, Robert Lagani{\`e}re, Noman Mohammed, and
  Yang Wang.
\newblock Membership inference attack against differentially private deep
  learning model.
\newblock {\em Trans. Data Priv.}, 11(1):61--79, 2018.

\bibitem{sablayrolles2019white}
Alexandre Sablayrolles, Matthijs Douze, Cordelia Schmid, Yann Ollivier, and
  Herv{\'e} J{\'e}gou.
\newblock White-box vs black-box: Bayes optimal strategies for membership
  inference.
\newblock In {\em International Conference on Machine Learning}, pages
  5558--5567. PMLR, 2019.

\bibitem{shokri2015privacy}
Reza Shokri and Vitaly Shmatikov.
\newblock Privacy-preserving deep learning.
\newblock In {\em Proceedings of the 22nd ACM SIGSAC conference on computer and
  communications security}, pages 1310--1321, 2015.

\bibitem{shokri2017membership}
Reza Shokri, Marco Stronati, Congzheng Song, and Vitaly Shmatikov.
\newblock Membership inference attacks against machine learning models.
\newblock In {\em 2017 IEEE Symposium on Security and Privacy (SP)}, pages
  3--18. IEEE, 2017.

\bibitem{socher2013zero}
Richard Socher, Milind Ganjoo, Christopher~D Manning, and Andrew Ng.
\newblock Zero-shot learning through cross-modal transfer.
\newblock In C.~J.~C. Burges, L. Bottou, M. Welling, Z. Ghahramani, and K.~Q.
  Weinberger, editors, {\em Advances in Neural Information Processing Systems},
  volume~26. Curran Associates, Inc., 2013.

\bibitem{song2013stochastic}
Shuang Song, Kamalika Chaudhuri, and Anand~D Sarwate.
\newblock Stochastic gradient descent with differentially private updates.
\newblock In {\em 2013 IEEE Global Conference on Signal and Information
  Processing}, pages 245--248. IEEE, 2013.

\bibitem{song21evading}
Shuang Song, Thomas Steinke, Om Thakkar, and Abhradeep Thakurta.
\newblock Evading the curse of dimensionality in unconstrained private glms.
\newblock In {\em International Conference on Artificial Intelligence and
  Statistics}, volume 130 of {\em Proceedings of Machine Learning Research},
  pages 2638--2646. PMLR, 13--15 Apr 2021.

\bibitem{tramer2020differentially}
Florian Tram{\`e}r and Dan Boneh.
\newblock Differentially private learning needs better features (or much more
  data).
\newblock {\em ICLR}, 2021.

\bibitem{uniyal2021dp}
Archit Uniyal, Rakshit Naidu, Sasikanth Kotti, Sahib Singh, Patrik~Joslin
  Kenfack, Fatemehsadat Mireshghallah, and Andrew Trask.
\newblock Dp-sgd vs pate: Which has less disparate impact on model accuracy?
\newblock {\em arXiv preprint arXiv:2106.12576}, 2021.

\bibitem{van2018three}
Koen~Lennart van~der Veen, Ruben Seggers, Peter Bloem, and Giorgio Patrini.
\newblock Three tools for practical differential privacy.
\newblock {\em PPML18: Privacy Preserving Machine Learning, NIPS 2018
  Workshop}, 2018.

\bibitem{wang2020differentially}
Jun Wang and Zhi-Hua Zhou.
\newblock Differentially private learning with small public data.
\newblock In {\em Proceedings of the AAAI Conference on Artificial
  Intelligence}, volume~34, pages 6219--6226, 2020.

\bibitem{wang2019differentially}
Lingxiao Wang and Quanquan Gu.
\newblock Differentially private iterative gradient hard thresholding for
  sparse learning.
\newblock In {\em 28th International Joint Conference on Artificial
  Intelligence}, 2019.

\bibitem{wang2019few}
Yaqing Wang, Quanming Yao, James~T Kwok, and Lionel~M Ni.
\newblock Generalizing from a few examples: A survey on few-shot learning.
\newblock {\em ACM Computing Surveys (CSUR)}, 53(3):1--34, 2020.

\bibitem{wang2019per}
Yu-Xiang Wang.
\newblock Per-instance differential privacy.
\newblock {\em Journal of Privacy and Confidentiality}, 9(1), 2019.

\bibitem{WelinderEtal2010}
P. Welinder, S. Branson, T. Mita, C. Wah, F. Schroff, S. Belongie, and P.
  Perona.
\newblock {Caltech-UCSD Birds 200}.
\newblock Technical Report CNS-TR-2010-001, California Institute of Technology,
  2010.

\bibitem{yu2021differentially}
Da Yu, Saurabh Naik, Arturs Backurs, Sivakanth Gopi, Huseyin~A Inan, Gautam
  Kamath, Janardhan Kulkarni, Yin~Tat Lee, Andre Manoel, Lukas Wutschitz,
  et~al.
\newblock Differentially private fine-tuning of language models.
\newblock {\em arXiv preprint arXiv:2110.06500}, 2021.

\bibitem{yu2021not}
Da Yu, Huishuai Zhang, Wei Chen, and Tie-Yan Liu.
\newblock Do not let privacy overbill utility: Gradient embedding perturbation
  for private learning.
\newblock In {\em International Conference on Learning Representations}, 2021.

\bibitem{zhang2021understanding}
Xinwei Zhang, Xiangyi Chen, Mingyi Hong, Zhiwei~Steven Wu, and Jinfeng Yi.
\newblock Understanding clipping for federated learning: Convergence and
  client-level differential privacy.
\newblock {\em arXiv preprint arXiv:2106.13673}, 2021.

\bibitem{zhou2020bypassing}
Yingxue Zhou, Steven Wu, and Arindam Banerjee.
\newblock Bypassing the ambient dimension: Private {\{}sgd{\}} with gradient
  subspace identification.
\newblock In {\em International Conference on Learning Representations}, 2021.

\bibitem{zhu2020deep}
Ligeng Zhu and Song Han.
\newblock Deep leakage from gradients.
\newblock In {\em Federated learning}, pages 17--31. Springer, 2020.

\bibitem{zhu2020private}
Yuqing Zhu, Xiang Yu, Manmohan Chandraker, and Yu-Xiang Wang.
\newblock Private-knn: Practical differential privacy for computer vision.
\newblock In {\em Proceedings of the IEEE/CVF Conference on Computer Vision and
  Pattern Recognition}, pages 11854--11862, 2020.

\end{thebibliography}
}

\clearpage
\newpage
\appendix

\section{Supplementary Material}

In the supplementary material we perform, (1) Ablation studies analyzing different components of our algorithm (\cref{sec:ablation}), (2) tuning the adaptive clipping threshold can further improve AdaMix (\ref{sec:clip_thresh}), (3) show that using a larger $\sigma$ (training for longer) is better (\ref{sec:longer}), (4) provide additional experimental results (\ref{sec:additional_experiments}), (5) provide details experimental details (\ref{sec:multimodal_appendix}, \ref{sec:exp_detail}), and (6) proofs for all the theoretical results presented in the paper (\cref{sec:dp-basics}, \ref{sec:perinstance-dp}, \ref{sec:proofs}). 

\subsection{Ablation Studies}
\label{sec:ablation}
In \cref{fig:ablation}, we ablate the two components of our algorithm, namely, Subspace Projection and Adaptive Clipping. We consider the Noisy-GD (NGD) as the baseline and analyze the improvement provided by the two components separately and finally in combination (AdaMix). We observe that adaptive clipping is the key component of the AdaMix algorithm. We show that using the two components together performs the best across multiple datasets. In the next section we show that further tuning the adaptive clipping threshold can improve the performance even more on some datasets. In \cref{fig:subspace_proj}, we plot the relative reconstruction error for the private gradients on the public gradient subspace and a random subspace. We observe that the reconstruction error on the public gradient subspace is at least half the random subspace throughout training, which suggests the importance of the public gradients for AdaMix.

\begin{figure}[h]
\centering
\includegraphics[width=\linewidth]{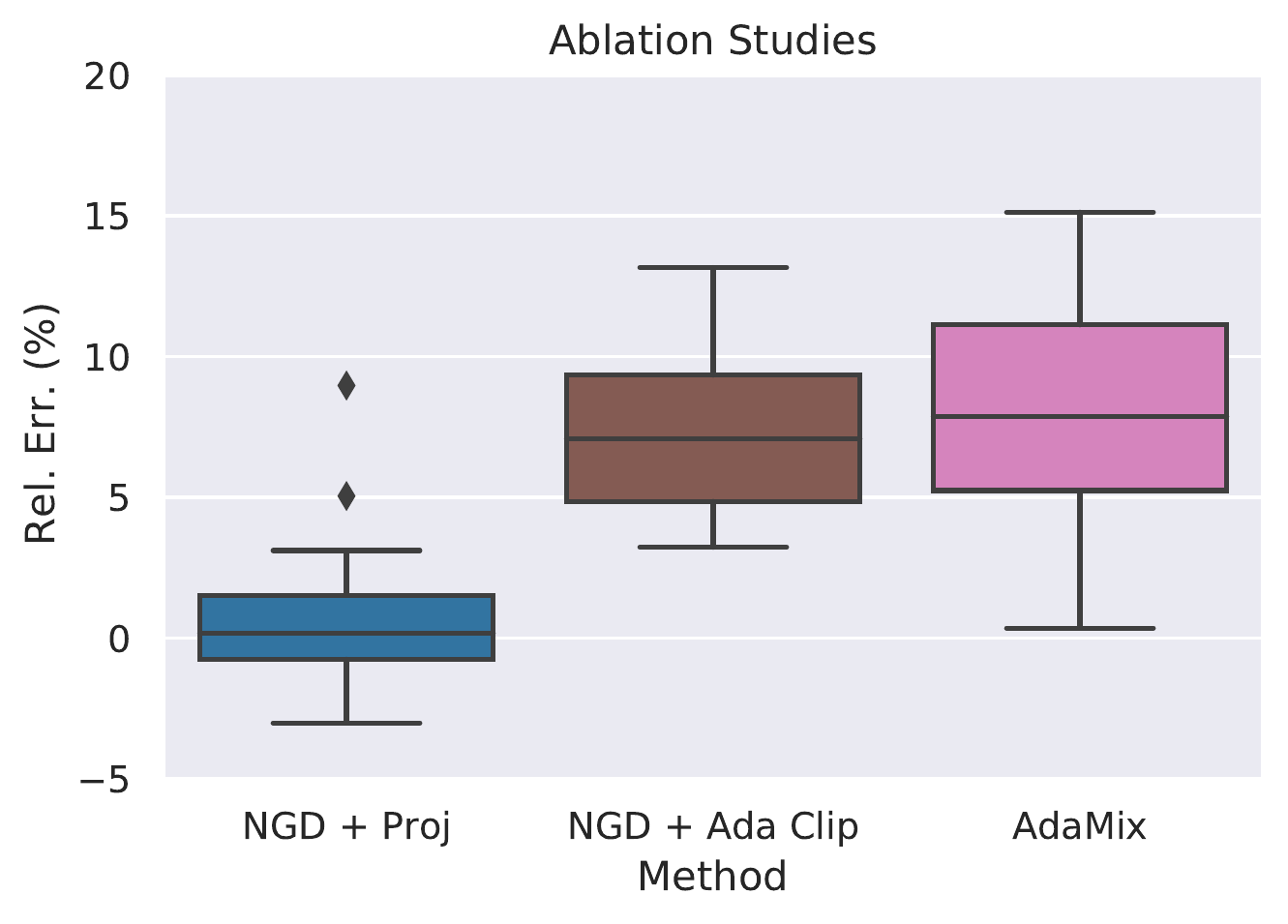}
\caption{\textbf{Ablation Studies} Box plot showing the relative decrease in the test error across multiple datasets when we add Subspace Projection, Adaptive Clipping and Subspace Projection + Adaptive Clipping (AdaMix) to NGD. We observe that adaptive clipping provides more improvement compared to subspace projection, and the combination of the two works the best across 6 datasets.}
\label{fig:ablation}
\end{figure}

\begin{figure}[h]
\centering
\includegraphics[width=\linewidth]{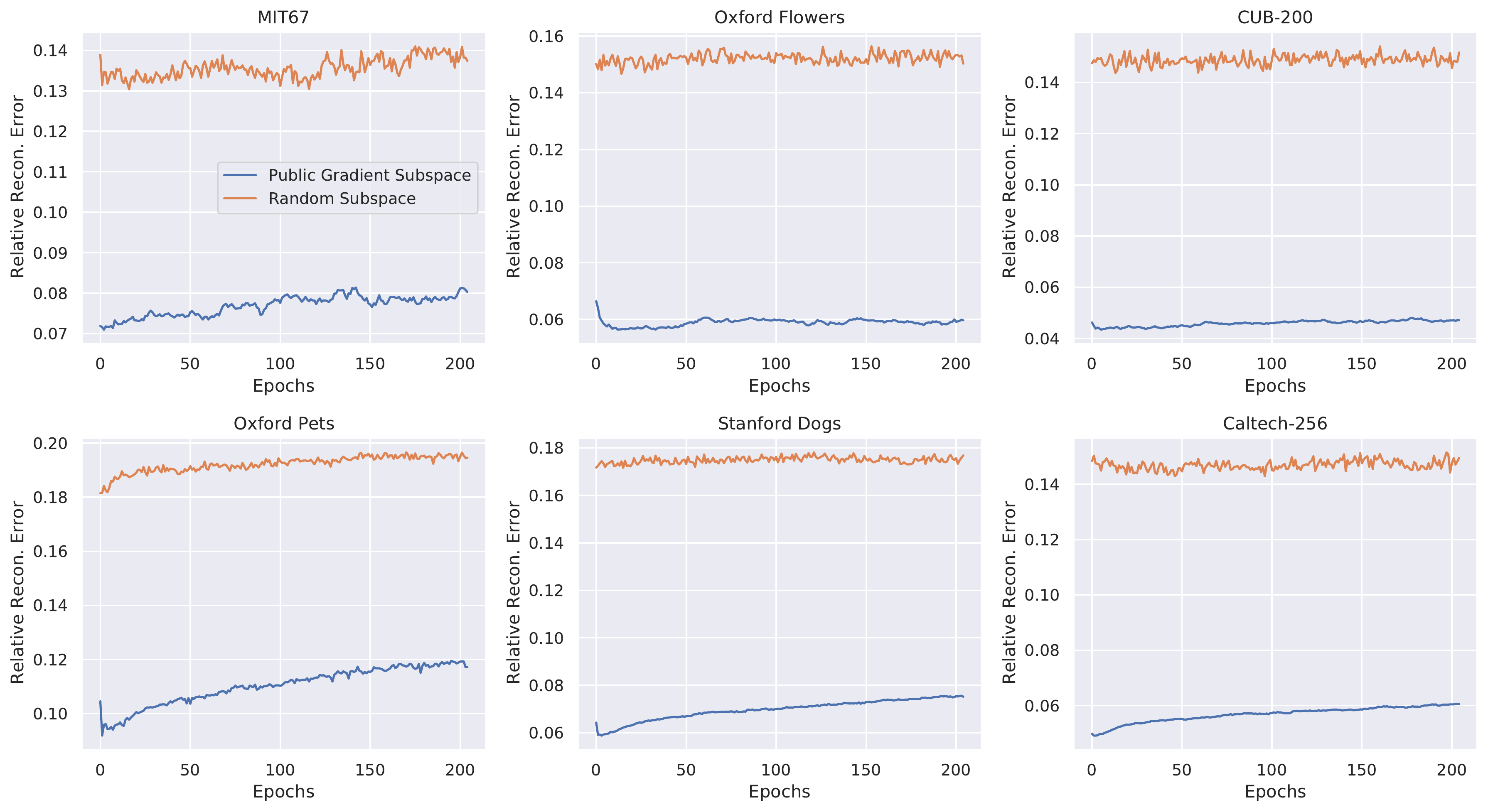}
\caption{\label{fig:subspace_proj}\textbf{Random Subspace Projection} We plot the relative reconstruction error of the private gradients when projecting on the subspace spanned by the public gradients or on a random subspace  during training (using AdaMix and $\epsilon=3$). The reconstruction error when projecting on the random subspace is generally more than twice the error obtained projecting on the subspace of public gradients, highlighting the importance of using the latter.}

\vspace{-1em}

\end{figure}

\vspace{-0.5cm}
\subsection{Effect of AdaMix clip threshold}
\label{sec:clip_thresh}
We plot the effect of adaptive clipping threshold (percentile) on the test error for AdaMix (we use 90 percentile in all of our experiments) in \cref{fig:effect-percentile}. However for CUB-200 and Caltech-256 tuning it to 75 percentile works better indicating that AdaMix has more scope for improvement. Note, however that using 90 percentile for CUB-200 and Caltech-256 still outperforms all the baselines. 

\begin{figure}[h]
\centering
\includegraphics[width=\linewidth]{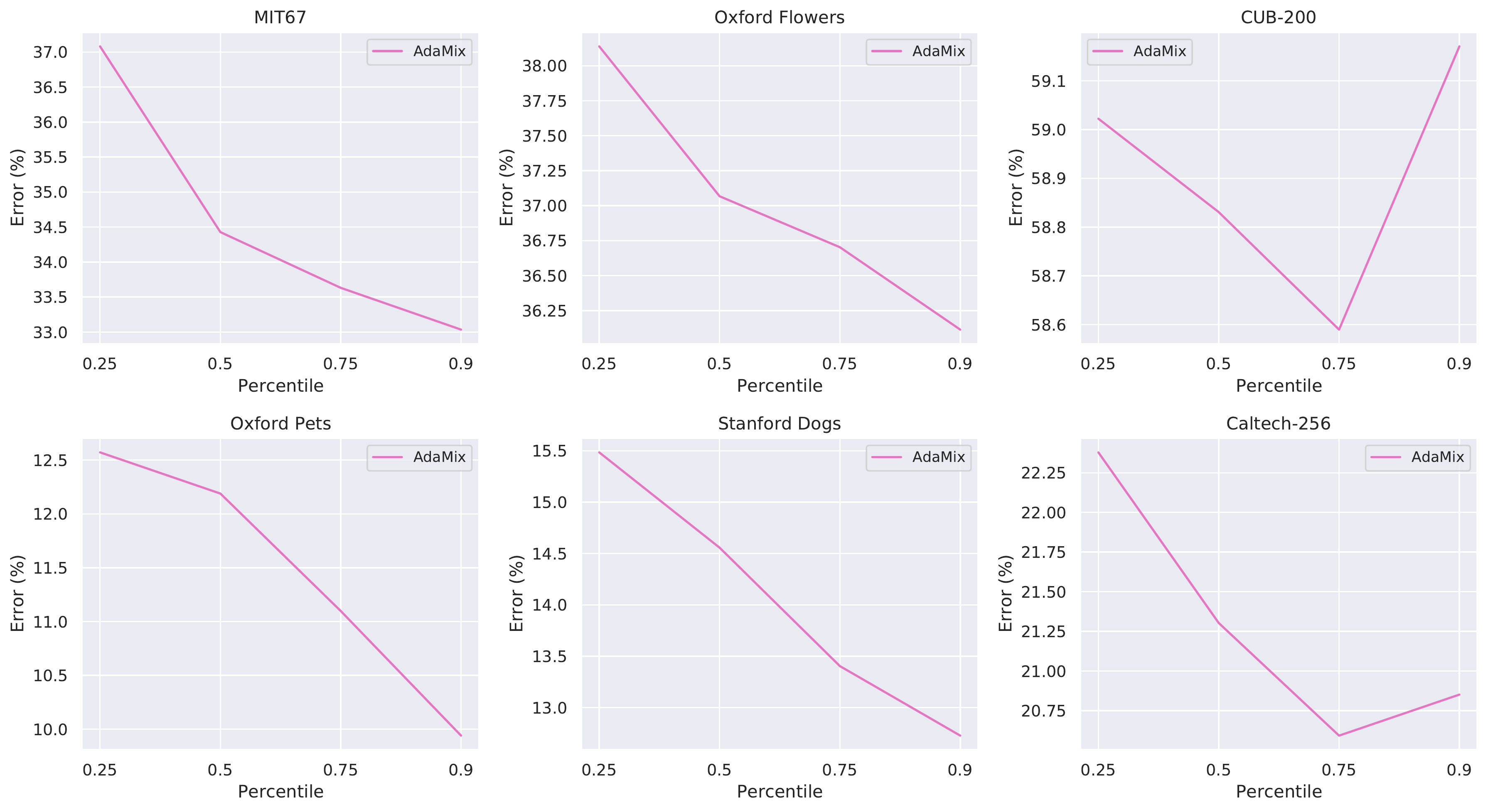}
\caption{\label{fig:effect-percentile}\textbf{Effect of adaptive clipping threshold} We plot the test accuracy of AdaMix using different values of adaptive clipping threshold percentile.}
\end{figure}

\subsection{Longer training with more noise}
\label{sec:longer}

Higher values of $\sigma$ requires more training steps, however, \cref{thm:excess_empirical_risk_bound} shows that it leads to better convergence, which leads to better generalization. In \cref{fig:effect-sigma} we see that indeed, at the same level of privacy,  using a large of $\sigma$ significantly reduces the test error compared to using a smaller $\sigma$.

\begin{figure}[h]
\centering
\includegraphics[width=\linewidth]{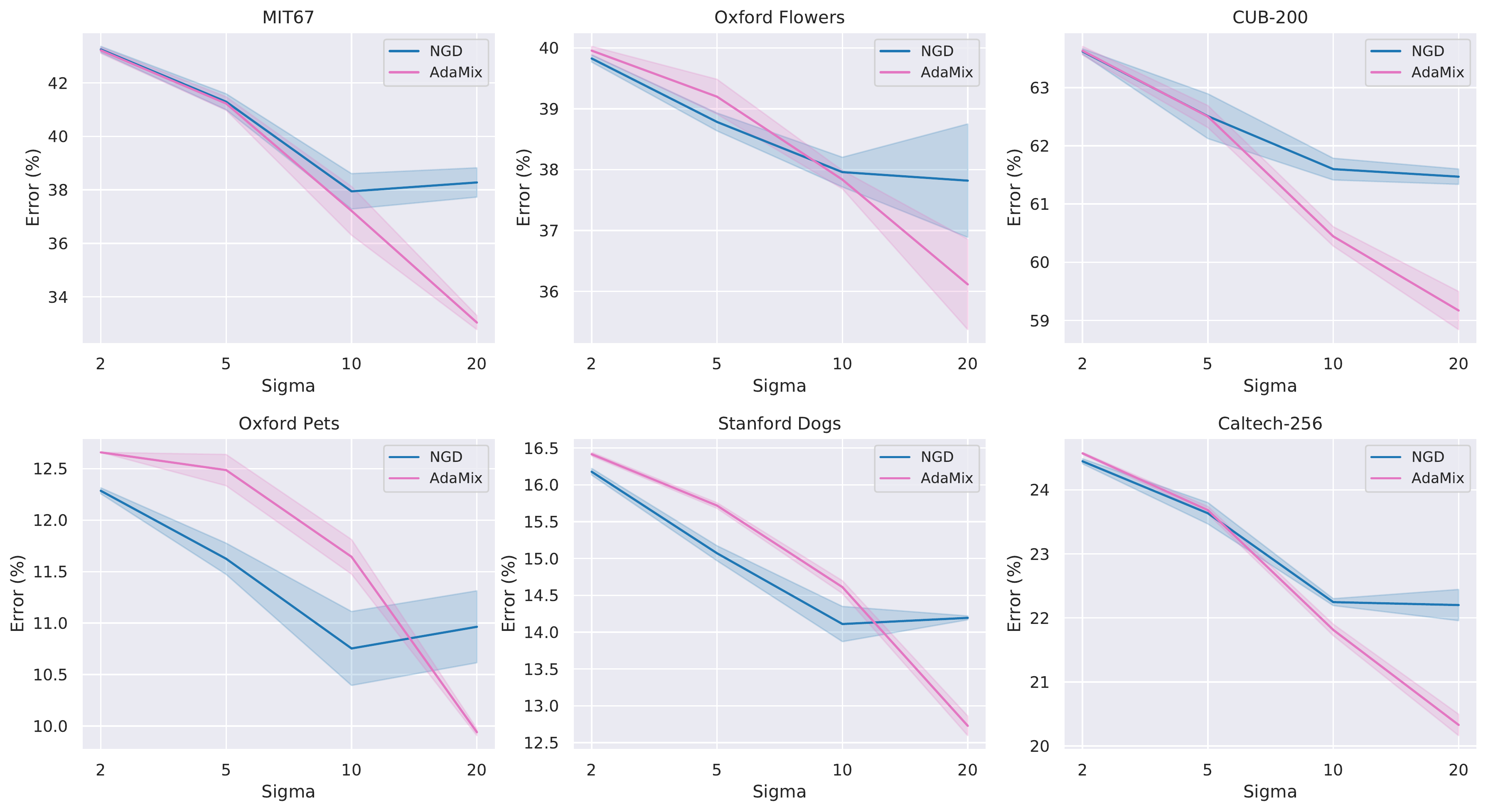}
\caption{\textbf{Larger values of $\sigma$ performs better.} We plot the test accuracy of NGD and AdaMix using different values of $\sigma$ for $\epsilon=3$. A larger $\sigma$ performs better but needs more training steps thus verifying the claim empirically across multiple datasets.}
\label{fig:effect-sigma}
\end{figure}

\subsection{Additional Experiments}
\label{sec:additional_experiments}
In the main paper we presented detailed experimental results on MIT-67, here we present detailed results on all the remaining datasets.

\vspace{-0.5cm}
\subsubsection{Test Error vs Privacy}
\label{sec:err_privacy_appendix}
We show the test error obtained by different methods for different levels of privacy $\epsilon$ and the robustness to membership attack, similar to \cref{fig:comparison} for different datasets in \cref{fig:error_vs_privacy_oxfordflowers,fig:error_vs_privacy_cub200,fig:error_vs_privacy_pets,fig:error_vs_privacy_dogs,fig:error_vs_privacy_caltech}

\begin{figure}[h]
\centering
\includegraphics[width=\linewidth]{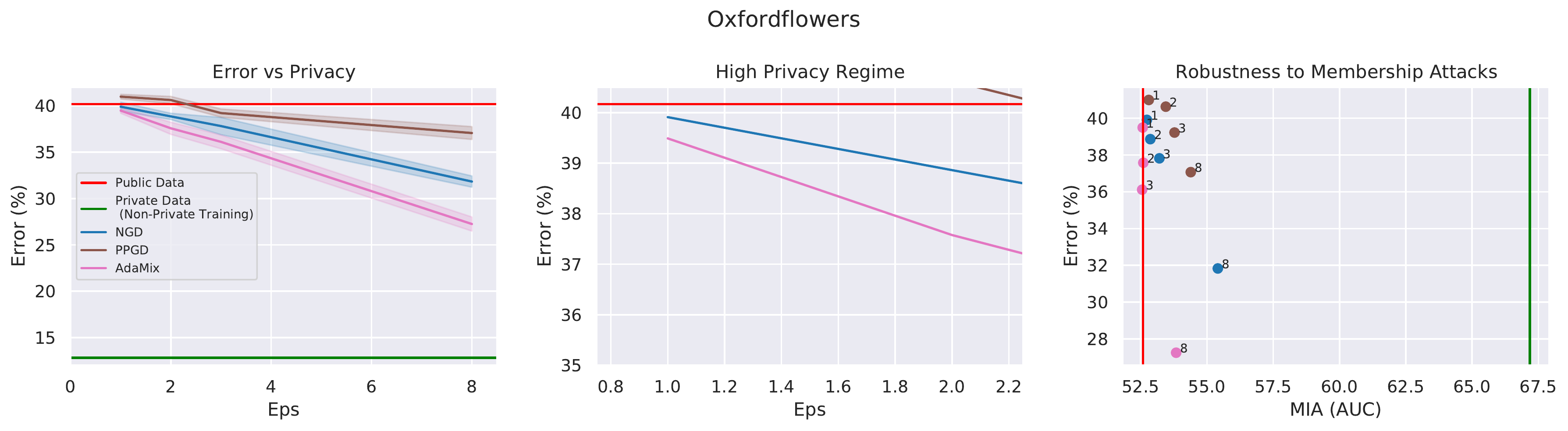}
\caption{Test error vs Privacy and Robustness to Membership Attacks on Oxford-Flowers}
\label{fig:error_vs_privacy_oxfordflowers}
\end{figure}
\vspace{-0.8cm}

\begin{figure}[h]
\centering
\includegraphics[width=\linewidth]{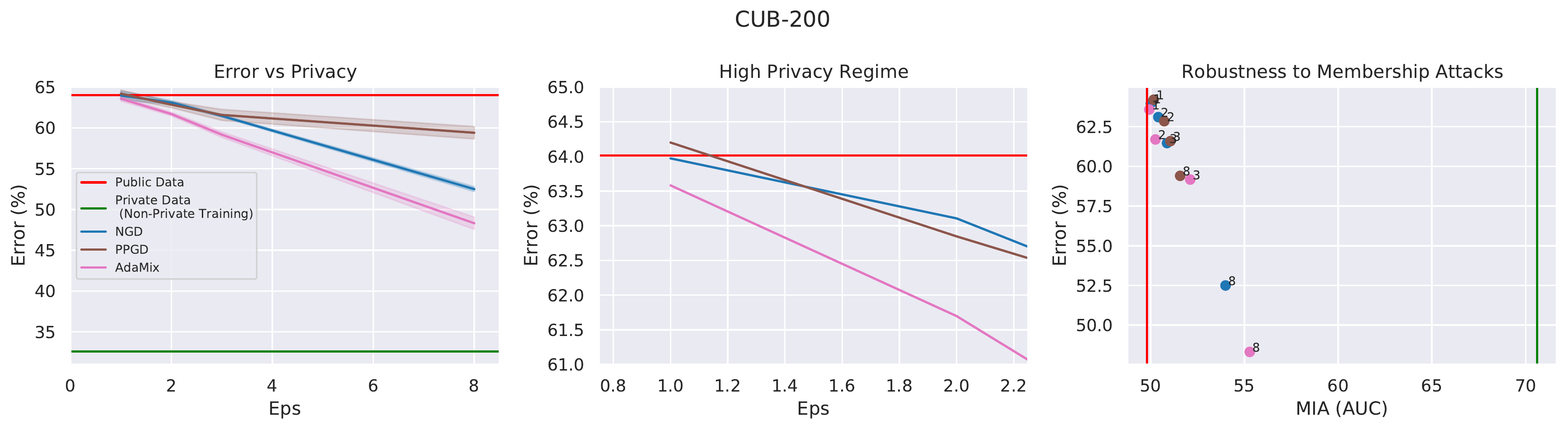}
\caption{Test error vs Privacy and Robustness to Membership Attacks on CUB-200}
\label{fig:error_vs_privacy_cub200}
\end{figure}
\vspace{-0.5cm}

\begin{figure}[h]
\centering
\includegraphics[width=\linewidth]{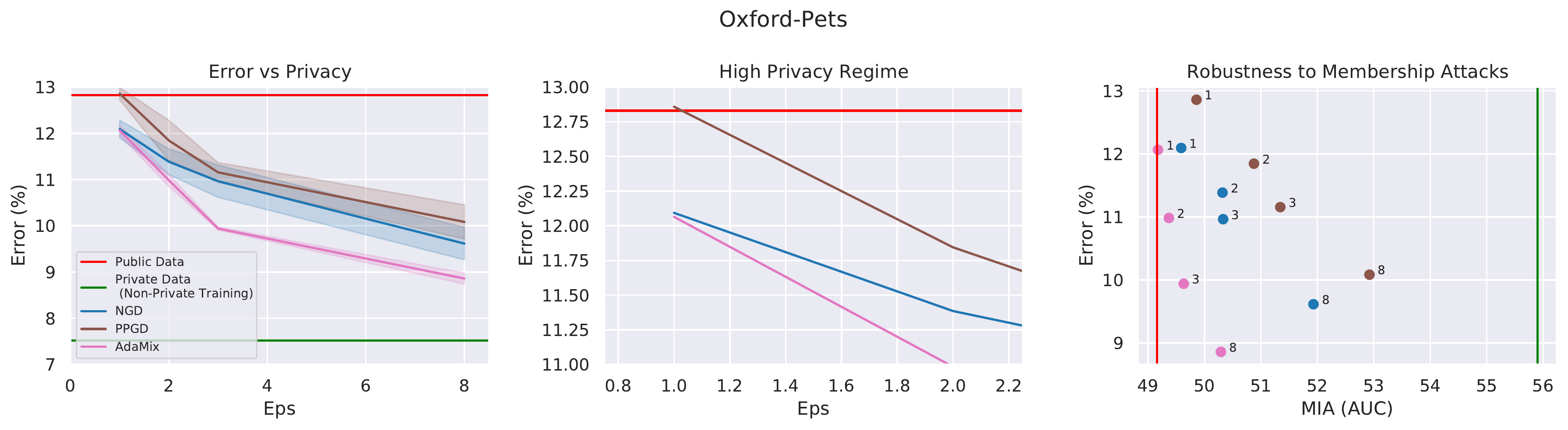}
\caption{Test error vs Privacy and Robustness to Membership Attacks on Oxford-Pets}
\label{fig:error_vs_privacy_pets}
\end{figure}

\begin{figure}[h]
\centering
\includegraphics[width=\linewidth]{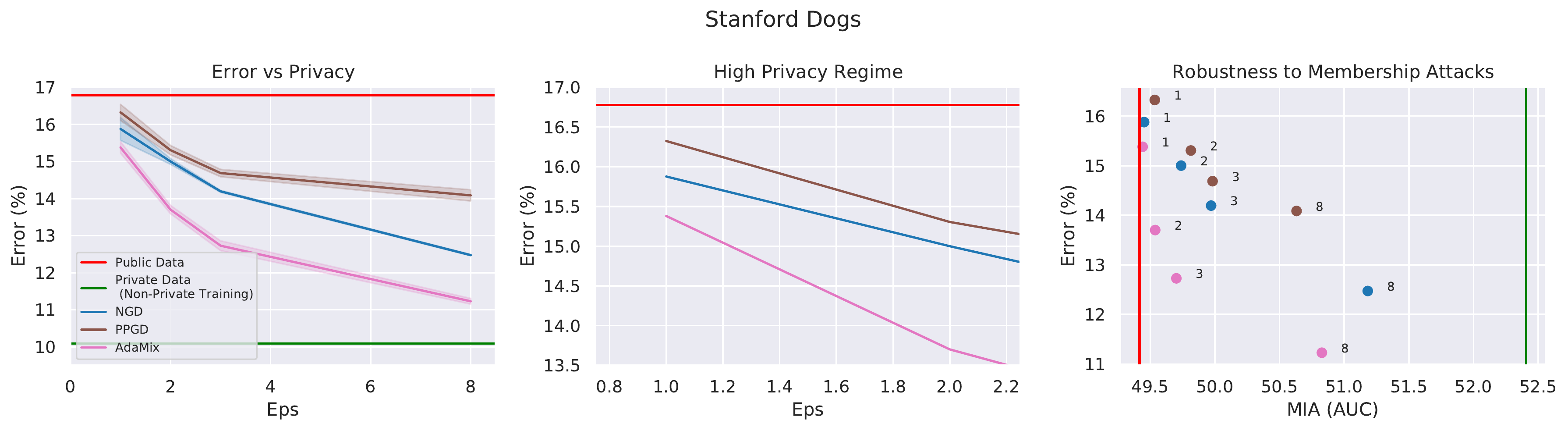}
\caption{Test error vs Privacy and Robustness to Membership Attacks on Stanford Dogs}
\label{fig:error_vs_privacy_dogs}
\end{figure}

\begin{figure}[h]
\centering
\includegraphics[width=\linewidth]{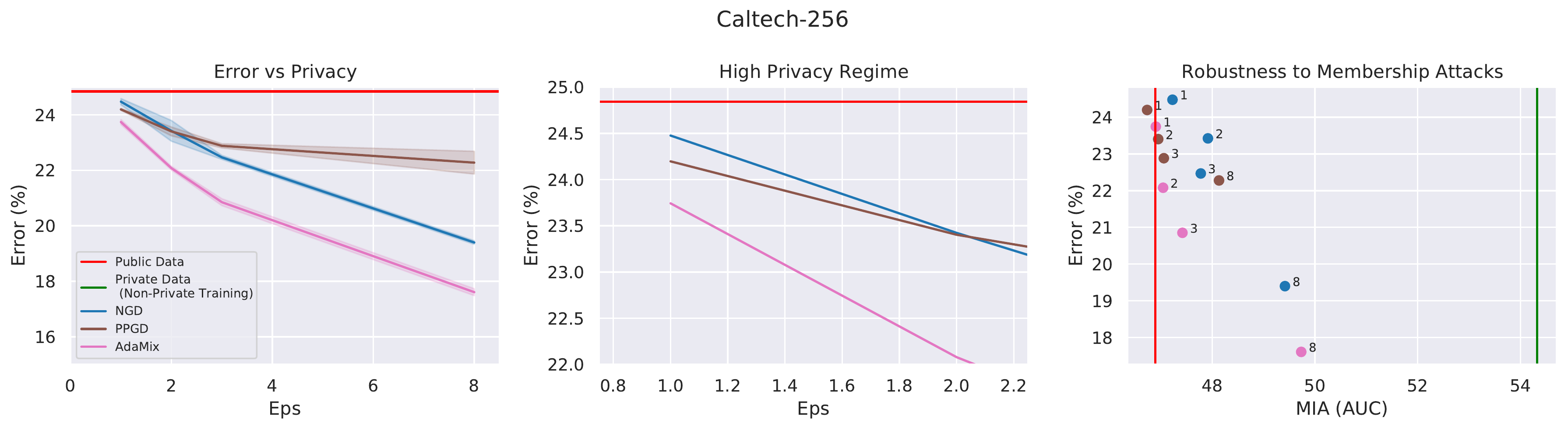}
\caption{Test error vs Privacy and Robustness to Membership Attacks on Caltech-256}
\label{fig:error_vs_privacy_caltech}
\end{figure}

\subsubsection{Per-Instance Privacy}
\label{sec:instance_privacy_appendix}
We show the pDP loss for NGD and AdaMix for different datasets in \cref{fig:pDP_oxfordflowers,fig:pDP_cub200,fig:pDP_pets,fig:pDP_stanforddogs,fig:pDP_caltech256}, similar to \cref{fig:individual} (we use $\epsilon=3$).

\begin{figure}[h]
\centering
\includegraphics[width=\linewidth]{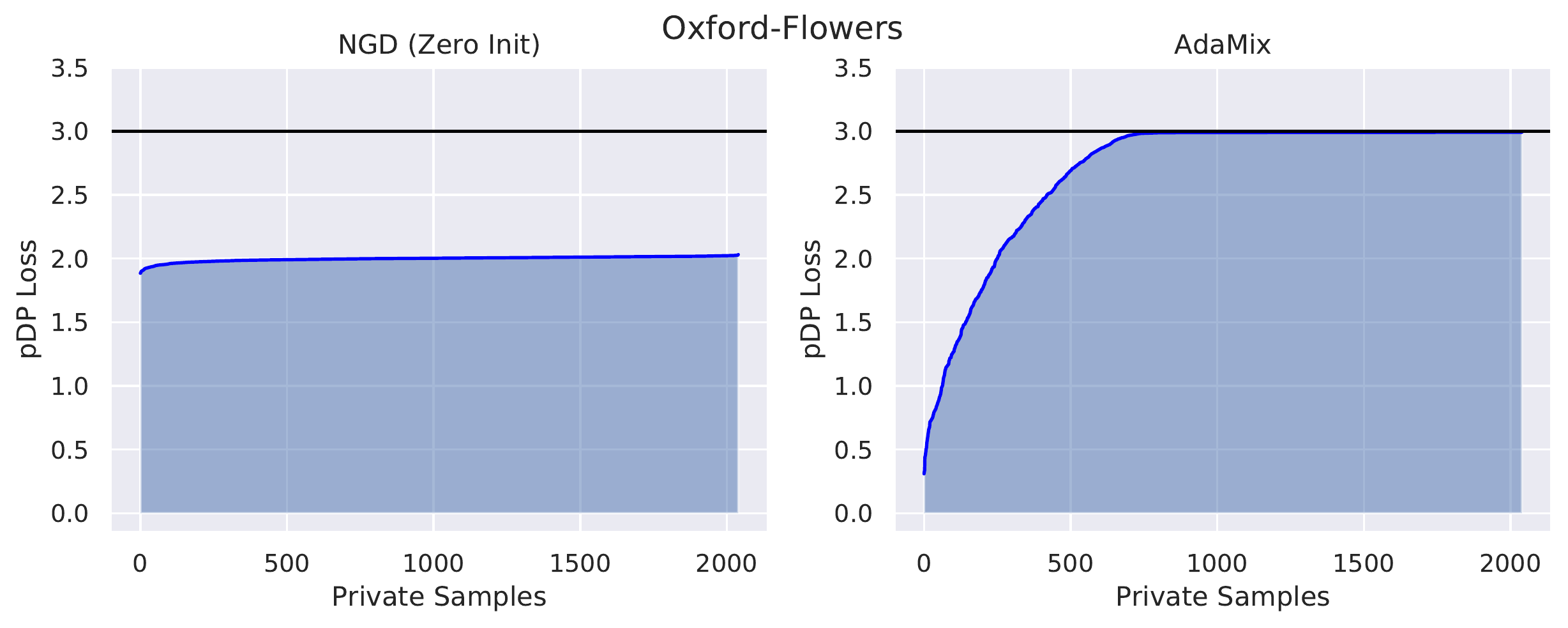}
\caption{Effect of public data on per-instance DP for Oxford Flowers}
\vspace{-1em}
\label{fig:pDP_oxfordflowers}
\end{figure}

\begin{figure}[h]
\centering
\includegraphics[width=\linewidth]{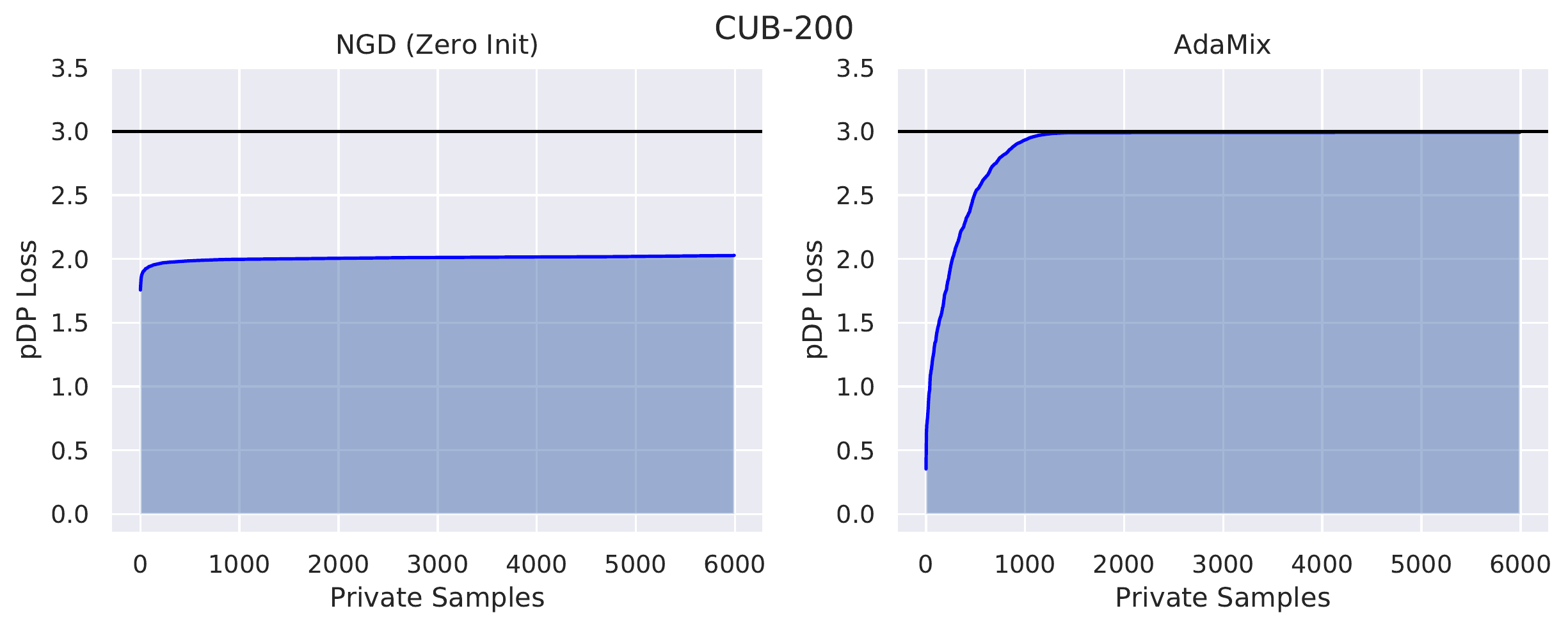}
\caption{Effect of public data on per-instance DP for CUB-200}
\vspace{-1em}
\label{fig:pDP_cub200}
\end{figure}

\begin{figure}[h]
\centering
\includegraphics[width=\linewidth]{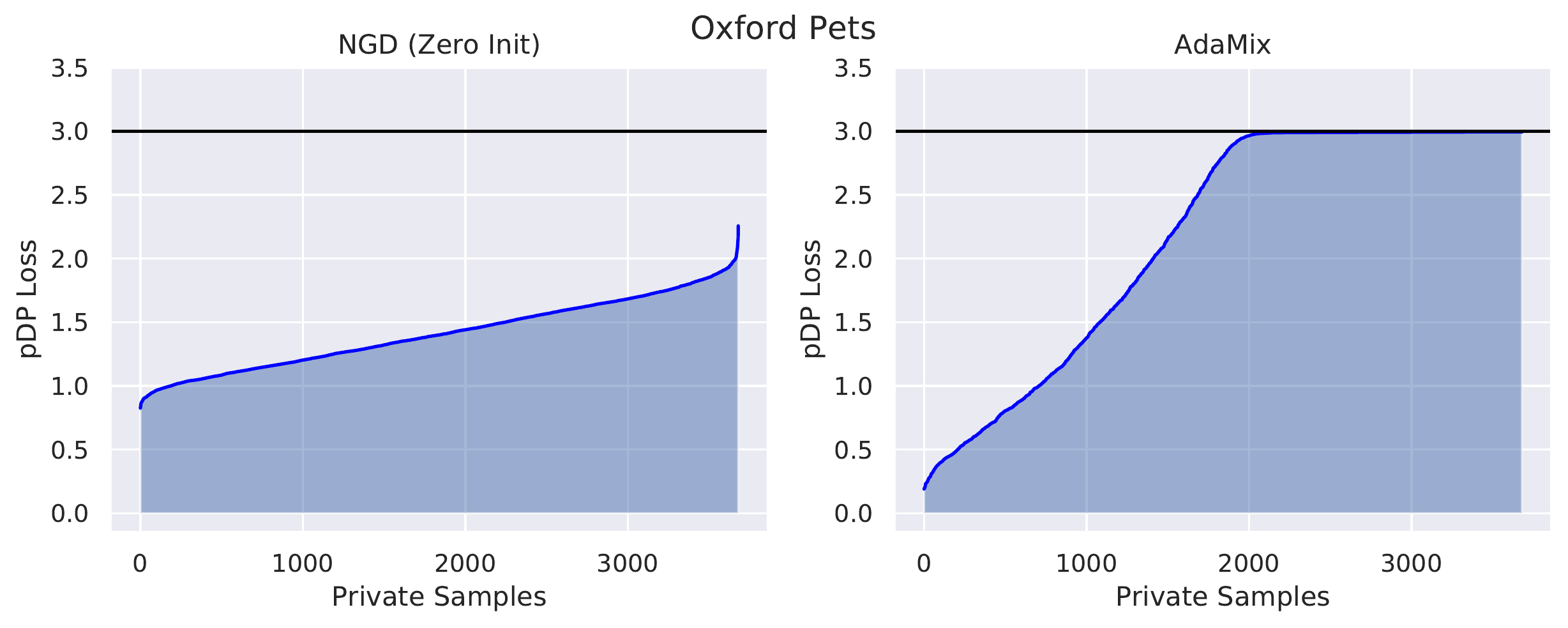}
\caption{Effect of public data on per-instance DP for Oxford Pets}
\vspace{-1em}
\label{fig:pDP_pets}
\end{figure}

\begin{figure}[h]
\centering
\includegraphics[width=\linewidth]{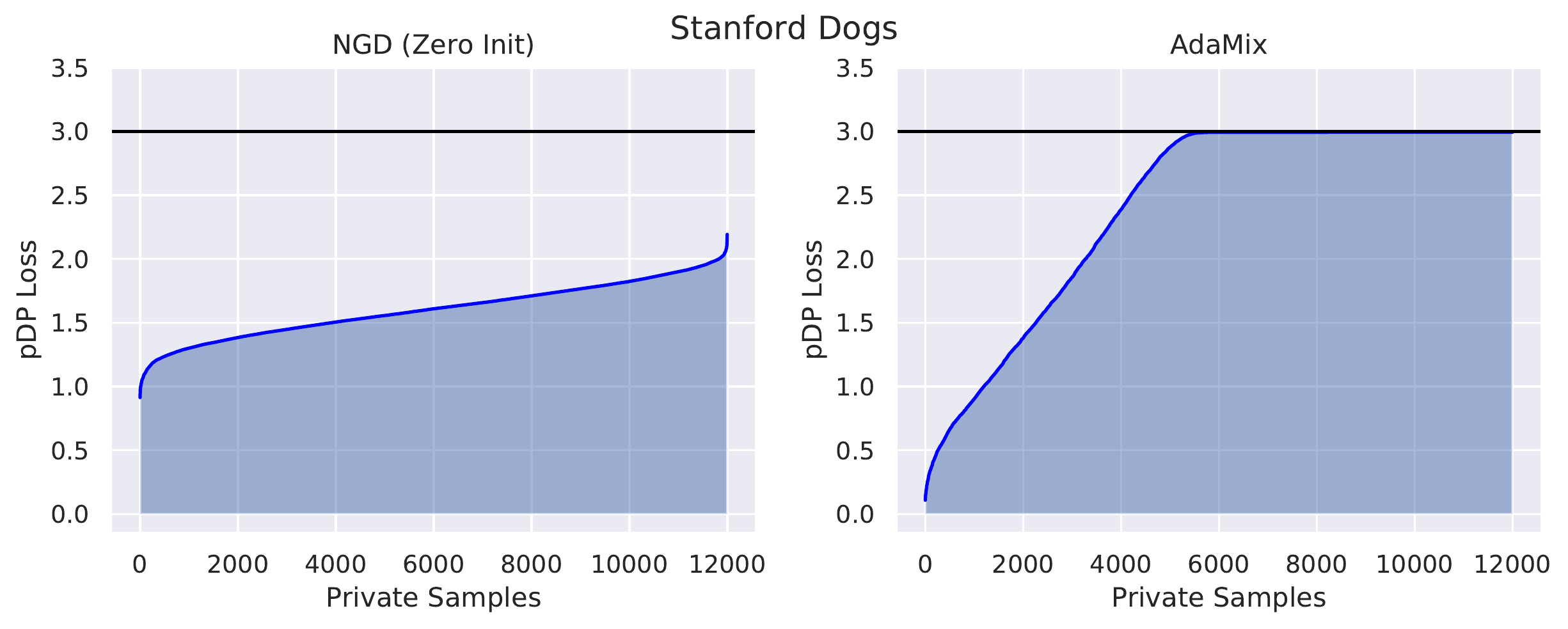}
\caption{Effect of public data on per-instance DP for Stanford-Dogs}
\label{fig:pDP_stanforddogs}
\vspace{-1em}
\end{figure}

\begin{figure}[h]
\centering
\includegraphics[width=\linewidth]{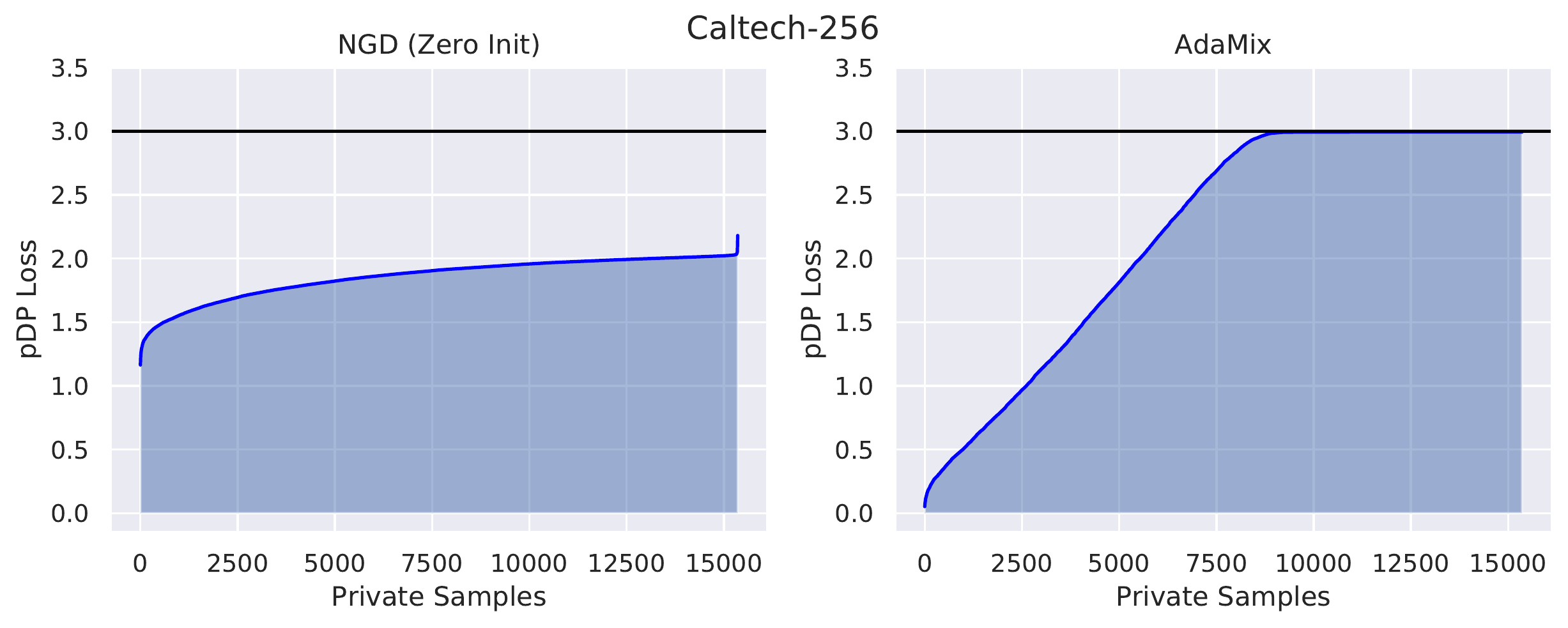}
\caption{Effect of public data on per-instance DP for Caltech-256}
\label{fig:pDP_caltech256}
\vspace{-1em}
\end{figure}

\subsection{Multi-modal Initialization}
\label{sec:multimodal_appendix}
We show the effect of using different initializations and CLIP features for different datasets in \cref{fig:multimodal_oxfordflowers,fig:multimodal_cub200,fig:multimodal_pets,fig:multimodal_stanforddogs}, similar to \cref{fig:multimodal} (we use $\epsilon=3$). We show that for Stanford Dogs and Oxford Pets datasets (\cref{fig:multimodal_pets} and \cref{fig:multimodal_stanforddogs}) which have images which are very similar to images in ImageNet, ResNet-50 trained on ImageNet performs better than CLIP features. However, the trend for the effect of initialization remains the same across all the datasets.

\begin{figure}[h]
\centering
\includegraphics[width=\linewidth]{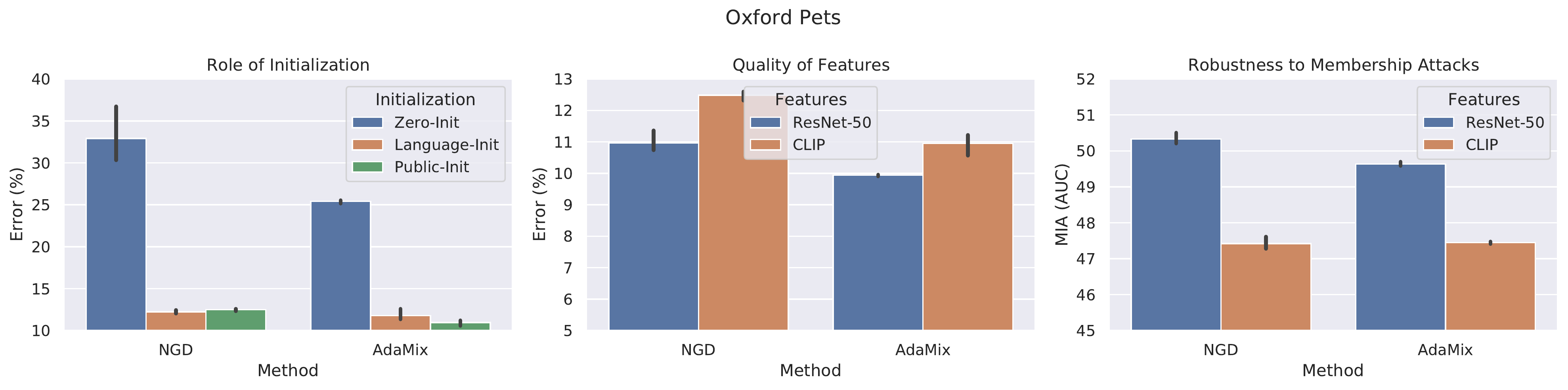}
\caption{Multi-model initialization and models for Oxford Pets}
\label{fig:multimodal_pets}
\end{figure}
\vspace{-0.5cm}

\begin{figure}[h]
\centering
\includegraphics[width=\linewidth]{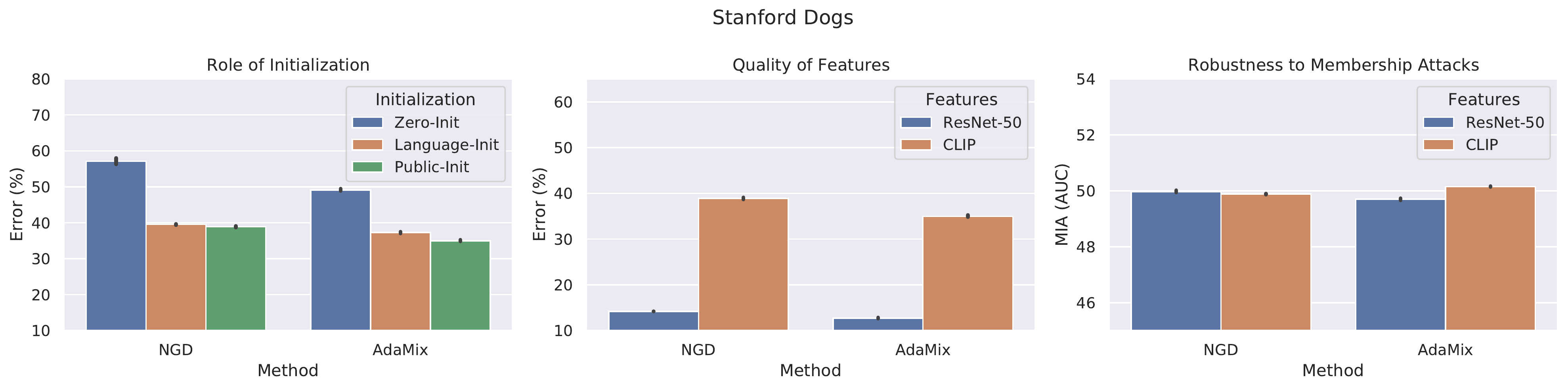}
\caption{Multi-model initialization and models for Stanford Dogs}
\label{fig:multimodal_stanforddogs}
\end{figure}
\vspace{-0.5cm}

\begin{figure}[h]
\centering
\includegraphics[width=\linewidth]{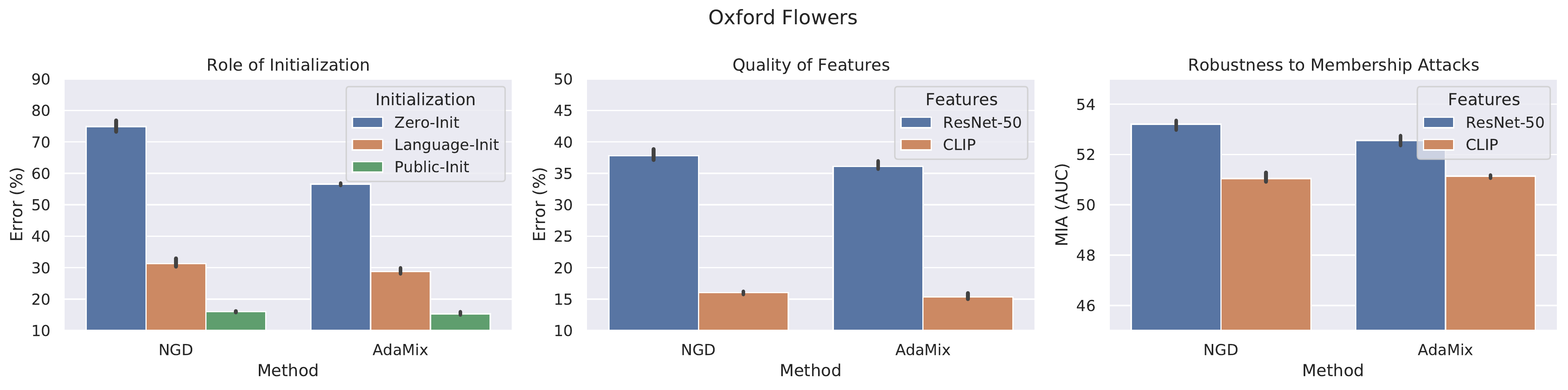}
\caption{Multi-model initialization and models for Oxford Flowers}
\label{fig:multimodal_oxfordflowers}
\end{figure}

\begin{figure}[h]
\centering
\includegraphics[width=\linewidth]{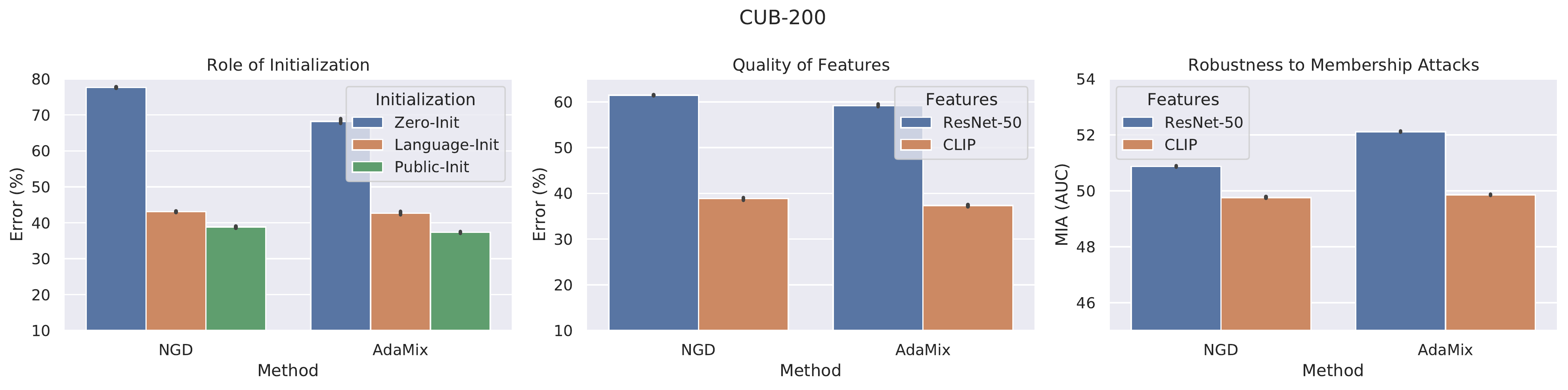}
\caption{Multi-model initialization and models for CUB-200}
\label{fig:multimodal_cub200}
\end{figure}

\subsection{Experimental Details}
\label{sec:exp_detail}
We use \href{https://github.com/yuxiangw/autodp}{Auto-DP} library for all of our experiments. For ResNet-50 features we use the \texttt{torchvision} version of the ResNet-50 model and for CLIP features we use the model provided.\footnote{\url{https://github.com/openai/CLIP}}

To create the public and private datasets, we take 2 samples (for Oxford-Flowers and CUB-200) and 5 samples (for MIT-67, Stanford Dogs, Oxford Pets, Caltech-256) from each class as the public dataset and the remaining samples as the private dataset. In this way, the public set is less than 10\% of the private set. We repeat all the experiments with 3 random seeds and report its mean and std.

For all the experiments we try multiple values for the learning rate: $\{1e-3, 2.5e-3, 5e-3\}$ for ResNet-50 experiments and $\{1e-6, 5e-7, 1e-7\}$ for CLIP experiments, and report the best values across 3 random seeds. We use $1e-2$ as the $L_2$ regularization coefficient across all the experiments. For subspace projection we project the gradients on a 2000-dimensional subspace for experiments with ResNet-50 features and 500-dimensional subspace for experiments with CLIP features. For the clipping threshold percentile we use a constant value of 90 percentile across all the experiments in the paper. In \cref{fig:effect-percentile}, we show that further tuning the clipping threshold percentile can improve performance on certain datasets, however, even with a pre-decided value of 90 percentile it still out-performs all the other methods.

We train the logistic model on the public data (few shot data) for 200 epochs (sames as iterations since we use gradient descent) for multiple values of learning rate: $\{1e-1, 5e-2, 1e-2\}$ for ResNet-50 features and $\{1e-4, 5e-5, 1e-5\}$ for CLIP features and choose the best performing model. For the private training, we use $\sigma=20$ for all the experiments and calculate the number of iterations required for private training using methods provided in \href{https://github.com/yuxiangw/autodp/blob/master/tutorials/tutorial_AdaSSP_vs_noisyGD.ipynb}{Auto-DP}. In the experiments we observe that choosing a higher value of $\sigma$ (thus training for more iterations) generally results in better performance.

\onecolumn
\clearpage
\section{Differential privacy basics}
\label{sec:dp-basics}
In this section, we describes tools we need from differential privacy and use them to prove that Algorithm~\ref{alg:mix-s} and Algorithm~\ref{alg:mix_theory} satisfies a family of differential privacy parameters.

\begin{lemma}[Analytical Gaussian mechanism {\cite{balle2018improving}}]
For a numeric query $f : \mathcal{X}^* \rightarrow \mathbb{R}^d$ over a dataset $\mathcal{D}$, the randomized algorithm that outputs $f(\mathcal{D}) + Z$ where $Z \sim \mathcal{N}(0, \sigma^2 I_d)$  satisfies $(\epsilon,\delta(\epsilon))$-DP for all $\epsilon\geq 0$ and
$\delta(\epsilon) = \Phi(\frac{\mu}{2}-\frac{\epsilon}{\mu}) - e^\epsilon \Phi(-\frac{\mu}{2}- \frac{\epsilon}{\mu})$
with parameter $\mu:=\Delta/\sigma$, where $\Delta:=\Delta_2^{(f)} = \max_{\mathcal{D} \sim \mathcal{D}^\prime} \| f(\mathcal{D}) - f(\mathcal{D}^\prime) \|_2$ is the global L2 sensitivity of $f$ and $\Phi$ is the CDF function of $\mathcal{N}(0,1)$.
\label{lm:ana_gau}
\end{lemma}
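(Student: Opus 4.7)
The plan is to reduce the general $d$-dimensional statement to a one-dimensional comparison of shifted Gaussians and then compute the tight hockey-stick divergence directly. For any neighbors $\mathcal{D} \simeq \mathcal{D}'$, the two output distributions are $P = \mathcal{N}(f(\mathcal{D}), \sigma^2 I_d)$ and $Q = \mathcal{N}(f(\mathcal{D}'), \sigma^2 I_d)$. Since the noise covariance is isotropic, I would apply an orthogonal rotation that aligns the shift $f(\mathcal{D}) - f(\mathcal{D}')$ with the first coordinate axis; the remaining $d-1$ coordinates are identically distributed under the rotated $P$ and $Q$, so by post-processing invariance of DP they contribute nothing to the privacy loss. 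This reduces the problem to comparing $\mathcal{N}(0, \sigma^2)$ with $\mathcal{N}(\Delta', \sigma^2)$ where $\Delta' = \|f(\mathcal{D}) - f(\mathcal{D}')\|_2 \leq \Delta$, and since the tight $\delta(\epsilon)$ is monotone in the shift it suffices to analyze the worst case $\Delta' = \Delta$.

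Next I would invoke the standard characterization of tight $(\epsilon, \delta)$-DP via the hockey-stick divergence: for fixed $P, Q$,
\[
\delta(\epsilon) \;=\; \int [p(x) - e^\epsilon q(x)]_+\, dx \;=\; \Pr_{X \sim P}[L \geq \epsilon] \;-\; e^\epsilon\, \Pr_{X \sim Q}[L \geq \epsilon],
\]
where $L = \log(p(X)/q(X))$ is the privacy loss random variable. A direct calculation with the Gaussian densities gives $L = (\Delta^2 - 2 \Delta X)/(2 \sigma^2)$, which is linear in $X$; hence $L \sim \mathcal{N}(\mu^2/2, \mu^2)$ when $X \sim P$ and $L \sim \mathcal{N}(-\mu^2/2, \mu^2)$ when $X \sim Q$, with $\mu = \Delta/\sigma$. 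Converting each tail probability through the standard normal CDF via $\Pr[\mathcal{N}(0,1) \geq t] = \Phi(-t)$ produces exactly the stated formula $\Phi(\mu/2 - \epsilon/\mu) - e^\epsilon \Phi(-\mu/2 - \epsilon/\mu)$. DP requires control in both directions, but swapping $P \leftrightarrow Q$ reflects the shift and the resulting hockey-stick expression is unchanged because it depends only on $\mu^2$; symmetry collapses the two-sided maximum into a single formula.

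The only subtle step I anticipate is the 1D reduction, which I would justify by combining the rotational invariance of the isotropic Gaussian with the post-processing property of DP applied to a deterministic orthogonal transformation (one must confirm that the rotation is data-independent, which is fine since it depends only on the direction of the sensitivity-maximizing pair). Everything after that is a routine Gaussian tail computation, and the formula drops out by substituting the Gaussian CDF into the hockey-stick identity.
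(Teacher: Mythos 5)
Your proposal is correct. Note that the paper does not prove this lemma at all---it is imported verbatim from Balle and Wang's analytical Gaussian mechanism paper \cite{balle2018improving}---and your derivation (rotate to reduce to a one-dimensional shift, bound the shift by the global sensitivity using monotonicity of the tight $\delta(\epsilon)$ in the shift, then evaluate the hockey-stick divergence via the privacy loss random variable $L\sim\mathcal{N}(\pm\mu^2/2,\mu^2)$) is exactly the standard argument given in that reference, and all of your Gaussian tail computations check out.
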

The above lemma tightly characterizes the $(\epsilon,\delta)$-DP guarantee of a single invocation of the Gaussian mechanism, and the following lemma shows that we can use the same result for an adaptive composition of a sequence of Gaussian mechanisms.

\begin{definition}[Gaussian Differential privacy {\cite{dong2021gaussian}}]
We say a mechanism $\cM$ satisfies $\mu$-Gaussian differential privacy (GDP), if it satisfies $(\epsilon,\delta(\epsilon))$-DP for all $\epsilon\geq 0$ and $\delta(\epsilon)$ being that of a single Gaussian mechanism (in Lemma~\ref{lm:ana_gau}) with parameter $\mu$. 
\end{definition}

\begin{lemma}[Composition of Gaussian mechanisms \cite{dong2021gaussian}]
The adaptive composition of a sequence of Gaussian mechanism with noise level $\sigma_1,\sigma_2,\dots$ and global L2 sensitivity $\Delta_1, \Delta_2, \dots$ satisfies $\mu$-GDP with 
parameter $\mu = \left(\sum_{i} (\Delta_{i}/\sigma_i)^2\right)^{1/2}$.
\label{lm:composition_gau}
\end{lemma}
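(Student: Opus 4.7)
My plan is to work in the f-DP framework of \cite{dong2021gaussian}, under which a mechanism $\mathcal{M}$ is $\mu$-GDP exactly when, for every neighboring pair $D \simeq D'$, the Neyman--Pearson trade-off function $T(\mathcal{M}(D), \mathcal{M}(D'))$ dominates the Gaussian trade-off function $G_\mu := T(\mathcal{N}(0,1), \mathcal{N}(\mu,1))$ pointwise. Since the family of $(\epsilon, \delta(\epsilon))$ pairs appearing in Lemma~\ref{lm:ana_gau} is precisely the dual description of $G_\mu$, establishing such a trade-off bound on the composed mechanism with $\mu = (\sum_i (\Delta_i/\sigma_i)^2)^{1/2}$ is equivalent to the claim.

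First I would handle a single round. Fixing $D \simeq D'$ and round $i$, the projection onto the unit vector $v_i := (f_i(D) - f_i(D'))/\|f_i(D) - f_i(D')\|$ is a sufficient statistic for distinguishing $\mathcal{N}(f_i(D), \sigma_i^2 I)$ from $\mathcal{N}(f_i(D'), \sigma_i^2 I)$, reducing the test to a one--dimensional Gaussian test with mean gap $\|f_i(D) - f_i(D')\|/\sigma_i \leq \Delta_i/\sigma_i$; by the data processing inequality for f-DP, the $i$-th round is $\mu_i$-GDP with $\mu_i := \Delta_i/\sigma_i$. To combine rounds I would invoke the adaptive composition theorem for f-DP, together with the key tensorization identity
\[
G_{\mu_1} \otimes G_{\mu_2} \;=\; G_{\sqrt{\mu_1^2 + \mu_2^2}},
\]
which I would verify by observing that distinguishing $\mathcal{N}(0, I_2)$ from $\mathcal{N}((\mu_1,\mu_2), I_2)$ is equivalent, under the orthogonal rotation aligning $(\mu_1, \mu_2)$ with the first coordinate axis (an invertible post-processing), to distinguishing $\mathcal{N}(0,1)$ from $\mathcal{N}(\sqrt{\mu_1^2 + \mu_2^2},1)$. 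Iterating this identity across all composed rounds yields the claimed $\mu$.

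The main obstacle is the adaptive composition step itself: the later mechanisms, their sensitivities $\Delta_i$, and even their noise levels $\sigma_i$ may depend on the realized outputs of earlier rounds, so one cannot naively multiply unconditional likelihood ratios into a tensor product. Following Dong, Roth, and Su, I would resolve this by conditioning on each history, using the fact that the worst-case conditional trade-off function still dominates $G_{\mu_i}$ uniformly, and integrating via the tower property. This step requires the symmetry, convexity, and monotonicity properties of trade-off functions developed in \cite{dong2021gaussian}; once that machinery is in hand, the tensorization of $G_\mu$ above delivers the result by induction on the number of composed rounds.
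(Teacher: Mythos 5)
The paper never proves this lemma: it is imported directly from \cite{dong2021gaussian} and used as a black box (e.g., in Proposition~\ref{prop:noisygd-is-private}), so there is no in-paper argument to compare against. Your sketch correctly reconstructs the standard proof from that reference --- the reduction of each round to a one-dimensional Gaussian test along the mean-difference direction, the rotation argument giving $G_{\mu_1} \otimes G_{\mu_2} = G_{\sqrt{\mu_1^2+\mu_2^2}}$, and the deferral of the genuinely delicate step (adaptive composition of trade-off functions, with the per-round bound holding uniformly over histories) to the f-DP composition machinery --- so nothing essential is missing.
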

Specifically, the noisy gradient descent (NoisyGD) algorithm (Algorithm~\ref{alg:noisyGD}) we use is a composition of $T$ Gaussian mechanisms for releasing the gradients and its privacy guarantee is equivalent to that of a single Gaussian mechanism.
\begin{proposition}
\label{prop:noisygd-is-private}
Let $\nabla f(w)$ be a function of the private dataset with global L2 sensitivity smaller than $L$ for any $w\in \cW$, then Algorithm~\ref{alg:noisyGD}
with parameter $T,\sigma^2$ such that $\rho:= \frac{T^2L^2}{2\sigma^2}$ satisfies $\sqrt{2\rho}$-Gaussian differential privacy.
\end{proposition}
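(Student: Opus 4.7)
The plan is to recognize Algorithm~\ref{alg:noisyGD} as an adaptive composition of $T$ Gaussian mechanisms and invoke the two preceding lemmas (Lemma~\ref{lm:ana_gau} and Lemma~\ref{lm:composition_gau}), plus the standard post-processing property of DP. Concretely, at each iteration $t$ the only step that touches the private dataset is the release of $G_t + n_t$, where $G_t$ is a function of the private data with global L2 sensitivity at most $L$ (by hypothesis, thinking of $G_t$ as $\nabla f(w_t)$ with the clipping absorbed into the bound $L$), and $n_t \sim \mathcal{N}(0,\sigma^2 I)$ is independent Gaussian noise. By Lemma~\ref{lm:ana_gau}, this single release is a Gaussian mechanism with per-step parameter $\mu_t = L/\sigma$.

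Next I would argue that every other operation in the algorithm is post-processing. The weight decay term $\lambda(w_t - w_{\text{ref}})$, the gradient descent step itself, and the projection $\Pi_{\cW}$ are all deterministic functions of $w_t$ (which is itself built from previous noisy releases plus the public initialization $w_1$, learning rate schedule, constraint set, clipping threshold, and noise level). In particular, $w_{t+1}$ is a post-processing of $\{G_s + n_s\}_{s\le t}$ together with public quantities, so by post-processing the entire transcript $(w_1,\dots,w_{T+1})$ is DP as long as the sequence of $T$ noisy gradient releases is.

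I would then invoke Lemma~\ref{lm:composition_gau} on the adaptive composition of the $T$ Gaussian mechanisms. Each step has sensitivity at most $L$ and noise level $\sigma$, so the composed parameter is
\[
\mu \;=\; \Bigl(\sum_{t=1}^{T} (L/\sigma)^2\Bigr)^{1/2},
\]
which in the paper's parameterization gives $\sqrt{2\rho}$-GDP with $\rho$ the quantity defined in the statement. Note that adaptivity of the composition is crucial: the query performed at step $t$ depends on $w_t$, which depends on prior noisy releases. This is precisely the regime covered by the GDP composition lemma, so no extra argument is needed beyond checking that each step's sensitivity bound $L$ holds uniformly in $w \in \cW$ (which is exactly the quantification in the hypothesis).

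The main potential obstacle is the bookkeeping of what counts as ``public'' versus ``private'' at each step, so that the post-processing argument is clean: one has to verify that $\tau$, $\sigma$, $\eta_t$, $\lambda$, $w_{\text{ref}}$, and $\cW$ are data-independent (or, if data-adaptive as in the AdaMix variant, computed only from $D_{\pub}$), so that conditional on $D_{\pri}$ they enter as public side information. Once this is in place, the proof is a direct chaining of Lemma~\ref{lm:ana_gau}, Lemma~\ref{lm:composition_gau}, and post-processing, with no additional analytic work required.
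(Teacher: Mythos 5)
Your proof is correct and takes essentially the same route as the paper, which simply invokes the GDP composition lemma (Lemma~\ref{lm:composition_gau}) on the adaptive composition of $T$ Gaussian mechanisms with global sensitivity $L$, leaving the post-processing bookkeeping implicit. One minor note: the composed parameter you compute is $\mu=\sqrt{T}\,L/\sigma$, which corresponds to $\rho = \frac{TL^2}{2\sigma^2}$ as written in Corollary~\ref{cor:noisygd_bound}, so the $T^2$ appearing in the proposition statement is evidently a typo rather than a flaw in your argument.
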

\begin{proof}
The proof follows from Lemma~\ref{lm:composition_gau} as Algorithm~\ref{alg:noisyGD} is an adaptive composition of $T$ Gaussian mechanisms with global sensitivity $L$. %
\end{proof}

\section{Per-instance differential privacy}
\label{sec:perinstance-dp}
In this section, we provide details on the per-instance differential privacy \cite{wang2019per} that we used to generate Figure~\ref{fig:individual}. 
To cleanly define per-instance differential privacy, we first define indistinguishability.
\begin{definition}[$(\epsilon,\delta)$-indistinguishability]
We say two distributions $P,Q$ are $(\epsilon,\delta)$-indistinguishable if for any measurable set $S$
\begin{align*}
&\mathsf{Pr}_P[S] \le e^{\epsilon} \cdot \mathsf{Pr}_Q[S] + \delta
\text{ and } \mathsf{Pr}_Q[S] \le e^{\epsilon} \cdot \mathsf{Pr}_P[S] + \delta.
\end{align*}
\end{definition}

\begin{definition}[\cite{wang2019per}]
We say a randomized algorithm $\cM$ is $(\epsilon(\cdot),\delta)$-per-instance differentially private (pDP) for scalar $\delta\geq 0$ and function $\epsilon: \mathcal{Z}^* \times \mathcal{Z} \rightarrow \R_+$, such that for any dataset $D\in\cZ^*$, individual $z\in\cZ$,
$\cM(D)$ and $\cM(D\cup \{z\})$ are $(\epsilon(D,z),\delta)$-indistinguishable.
\end{definition}

pDP loss $\epsilon$ is a function of one particular pair of neighboring datasets. It describes the formal privacy loss incurred to the particular individual $z$ that is added (if $z$ is not part of the input dataset) or removed (if $z$ is part of the input dataset). 
pDP is a strict generalization of DP, as we can recover DP from pDP by maximizing $\epsilon(\cdot)$ over $D,z$.
\begin{lemma}
If $\cM$ is $(\epsilon(\cdot),\delta)$-pDP, then $\cM$ is also $(\sup_{D\in\cZ^*,z\in\cZ}\epsilon(D,z),\delta)$-DP.
\end{lemma}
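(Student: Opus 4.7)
The plan is to unfold the two definitions and observe that the DP condition is essentially what the pDP condition gives us after we take a supremum over the choice of the differing point. Let $\epsilon^* := \sup_{D\in\cZ^*, z\in\cZ} \epsilon(D,z)$.

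First I would fix an arbitrary pair of neighboring datasets $D \simeq D'$ and use the definition of ``neighboring'' from Section~3.1 to write, without loss of generality, $D' = D \cup \{z\}$ for some $z\in\cZ$ (the other case, $D = D' \cup \{z\}$, is handled symmetrically by swapping the roles of the two datasets). Then invoking the pDP hypothesis directly yields that $\cM(D)$ and $\cM(D')$ are $(\epsilon(D,z),\delta)$-indistinguishable.

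Next I would observe the elementary monotonicity fact that $(\epsilon,\delta)$-indistinguishability implies $(\epsilon',\delta)$-indistinguishability whenever $\epsilon' \ge \epsilon$, since the right-hand side of both inequalities in the definition only grows when $\epsilon$ increases. Applying this with $\epsilon' = \epsilon^*$ upgrades the guarantee to $(\epsilon^*,\delta)$-indistinguishability for this particular pair $(D,D')$.

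Finally I would unpack $(\epsilon^*,\delta)$-indistinguishability: its two inequalities are exactly the DP inequality $\mathsf{Pr}[\cM(D)\in S] \le e^{\epsilon^*}\mathsf{Pr}[\cM(D')\in S]+\delta$ together with its symmetric counterpart with $D$ and $D'$ swapped, which is the DP inequality for the reversed neighboring pair. Since $D,D'$ were arbitrary neighbors and $S$ was arbitrary measurable, this establishes $(\epsilon^*,\delta)$-DP as required. There is no real obstacle here beyond bookkeeping; the only subtlety worth flagging is the mild asymmetry between pDP (which is stated for $D$ vs.\ $D\cup\{z\}$) and DP (which is stated for arbitrary neighbors), but this is precisely absorbed by the two-sided form of indistinguishability in the pDP definition.
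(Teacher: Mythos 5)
Your proof is correct and matches the paper's (implicit) argument: the paper states this lemma without a written proof, remarking only that DP is recovered from pDP ``by maximizing $\epsilon(\cdot)$ over $D,z$,'' which is exactly the definition-unfolding and monotonicity argument you spell out. Your handling of the two cases of neighboring datasets via the two-sided indistinguishability condition is the right bookkeeping and introduces no gap.
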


We emphasize that the pDP loss $\epsilon(\cdot)$ itself is data-independent, but specific evaluations of the pDP losses (e.g., $\epsilon(D_{-z},z)$ or $\epsilon(D,z)$) depends on the private dataset $D$, thus should not be revealed unless additional care is taken to privately release these numbers. 

For our purpose, we are interested in the distribution of pDP losses of individuals in the dataset induced by different DP algorithms. This is used to provide a theoretically-sound alternative to the prevalent practices of using specific attacks (such as membership inference attacks) for evaluating the data-dependent privacy losses.
Before we state the pDP bounds of our algorithms, we extend the standard $(\epsilon(\cdot),\delta)$-pDP definition to a per-instance version of the Gaussian DP.
\begin{definition}
We say a mechanism is $\mu(\cdot)$-per-instance Gaussian Differentially Private (pGDP), if $(D,D\cup z)$ (and $(D\cup z, D)$) are $(\epsilon,\delta)$-indistinguishable for all $\epsilon,\delta$ parameters described by $\mu(D,z)$-GDP.
\end{definition}

This allows us to obtain precise pDP bounds under composition. 
\begin{proposition}[pDP analysis of AdaMix]\label{prop:pDP_of_adamix}
Let $z_1,...,z_n$ be the data points of the private dataset. Algorithm~\ref{alg:mix_theory} satisfies $\mu(\cdot)$-pGDP with
$$\mu(D_{-i},z_i) = \sqrt{\sum_{t=1}^{T} \frac{\min\{\|\nabla\ell_i(w_t)\|,\tau\}^2}{\sigma^2}}.$$
Similarly, Algorithm~\ref{alg:mix-s} satisfies $\mu(\cdot)$-pGDP with
$$
\mu(D_{-i},z_i) = \sqrt{\sum_{t=1}^{T} \frac{\|U^T \tilde{g}^\pri_i(w_t)\|_2^2}{\tau_t^2\sigma^2}}.
$$
\end{proposition}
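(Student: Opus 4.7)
}

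The plan is to repeat the privacy proof for NoisyGD (Proposition~\ref{prop:noisygd-is-private}) but replacing the \emph{global} $L_2$ sensitivity at each step with the \emph{per-instance} $L_2$ sensitivity induced by the single individual $z_i$ that distinguishes the neighboring pair $D_{-i}$ and $D = D_{-i} \cup \{z_i\}$. The key observation is that Gaussian DP is governed by a sum-of-squared-sensitivities quantity (Lemma~\ref{lm:composition_gau}), so the instance-dependent analog is obtained simply by plugging in the actual per-step contribution of $z_i$ rather than the worst case over all datapoints.

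First I would fix the neighboring pair $(D_{-i}, D_{-i} \cup \{z_i\})$ and view the entire trajectory $(w_1, \ldots, w_{T+1})$ produced by the algorithm as an adaptive composition of $T$ Gaussian releases, exactly as in the proof of Proposition~\ref{prop:noisygd-is-private}. At step $t$, conditioned on $w_t$ (which, by coupling the shared randomness across the two runs, can be treated as identical in both worlds for the purpose of the adaptive composition bookkeeping), the only difference in the released quantity $G_t + n_t$ between the two runs is the clipped gradient contribution of $z_i$. For Algorithm~\ref{alg:mix_theory} that contribution is $\operatorname{clip}_\tau(\nabla \ell_i(w_t))$, whose norm is exactly $\min\{\|\nabla \ell_i(w_t)\|, \tau\}$, so the per-step Gaussian-DP parameter conditioned on $w_t$ is $\min\{\|\nabla \ell_i(w_t)\|, \tau\}/\sigma$. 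For Algorithm~\ref{alg:mix-s}, after projecting the clipped private gradient through $U$ and adding noise $n_t \sim \cN(0, \sigma^2 \tau_t^2 I)$, the contribution is $U^T \tilde{g}_i^{\pri}(w_t)$ and the per-step GDP parameter is $\|U^T \tilde{g}_i^{\pri}(w_t)\|_2 / (\sigma \tau_t)$.

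Second, I would invoke the Gaussian-DP adaptive composition (Lemma~\ref{lm:composition_gau}), which combines per-step $\mu_t$ values by taking the $\ell_2$-norm, to conclude that the whole algorithm satisfies $\mu(D_{-i}, z_i)$-pGDP with the two claimed expressions. The pGDP statement then means exactly the per-instance indistinguishability of $\cM(D_{-i})$ and $\cM(D_{-i} \cup \{z_i\})$ at every $(\epsilon, \delta)$ pair on the GDP curve, which is the definition of $\mu(\cdot)$-pGDP.

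The main subtlety, and where I would spend most of the care, is the adaptivity: the weights $w_t$ themselves depend on the dataset, so $\|\nabla \ell_i(w_t)\|$ and $\|U^T \tilde{g}_i^{\pri}(w_t)\|$ are random and differ across the two runs. The standard resolution, as in the usual adaptive-composition proof for GDP, is to couple the Gaussian noises across the two runs and apply the composition conditionally on the shared filtration; the resulting bound is expressed in terms of the sensitivities evaluated along the trajectory of the run \emph{that includes $z_i$} (equivalently, one can phrase the pDP statement symmetrically by taking the max of the two trajectories). Making this coupling step precise, and noting that the claimed $\mu(D_{-i}, z_i)$ is indeed data-dependent in exactly this sense, is the only nontrivial part; everything else is a direct instantiation of the proofs of Proposition~\ref{prop:noisygd-is-private} and Lemma~\ref{lm:composition_gau}.
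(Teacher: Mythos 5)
Your proposal takes essentially the same route as the paper: both view the algorithm as an adaptive composition of $T$ Gaussian mechanisms and apply the GDP composition theorem (Lemma~\ref{lm:composition_gau}) with the dataset space restricted to the single neighboring pair $\{D_{-i}, D_{-i}\cup\{z_i\}\}$, so that the per-step sensitivity becomes the per-instance contribution $\min\{\|\nabla\ell_i(w_t)\|,\tau\}$ (resp.\ $\|U^T\tilde{g}_i^{\pri}(w_t)\|_2/\tau_t$) rather than the worst case. Your added care about the trajectory-dependence of $w_t$ is a reasonable elaboration of a point the paper's two-sentence proof leaves implicit, but it does not change the argument.
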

\begin{proof}
    Both algorithms are the composition of $T$-Gaussian mechanisms. Thus the results follow by the composition of pGDP. The composition of pGDP is implied by the composition theorem of GDP (Lemma~\ref{lm:composition_gau}) by choosing the space of datasets to be just $\left\{ D,D\cup\{z\}\right\}$.
\end{proof}

Fixing any $\delta$, we can then compute the corresponding $\epsilon(\cdot)$ for $(\epsilon(\cdot),\delta)$-pDP using the formula of Gaussian mechanism with $\mu$ taken to be $\mu(\cdot)$ pointwise.

\section{Proofs of the technical results}
\label{sec:proofs}

\begin{algorithm}[t]
\caption{(Theoretical version of) AdaMix training algorithm (no clipping, no adaptive projection, slightly different pretraining, theoretically chosen learning rate).
}\label{alg:mix_theory}
\KwData{Public dataset $D_\pub$, private dataset $D_\pri$, privacy parameter $(\epsilon, \delta)$, noise variance $\sigma$, Lipschitz constant $L$\footnotemark, population-level strong convex parameter $c$, regularization parameter $\lambda$ and a constraint set $\cW$.}
\KwResult{$\bar{w}$}

\Comment{Public Pretraining Phase (OnePassSGD on $D_\pub$):}
$w_1 = 0$.
\For{$t = 1,\ldots,N_{\pub}$}{
\Comment{In a shuffled order}
$\eta_t = \frac{2}{c(t+1)}$\;
$w_{t+1} \gets \Pi_{\cW}\big(w_{t} - \eta_t \nabla \ell(w_t, (\tilde{x}_t,\tilde{y}_t))\big)$\;
}
$w_{\text{ref}} \gets w_{N_{\pub}+1}$.

\Comment{Mix Training Phase (NoisyGD on $D_\pub\cup D_\pri$):}

$T \gets \operatorname{Calibrate}(\epsilon, \delta, \sigma)$ \Comment{(i.e., $\frac{T\tau^2}{2\sigma^2} =: \rho$)}
\Comment{NoisyGD on objective function $\cL(w) + \frac{\lambda}{2}\|w-w_{\text{ref}}\|^2$}

$w_1 = w_{\text{ref}}$\;
\For{$t = 1,\ldots,T$}{
    $
    \eta_t \gets \frac{2}{\lambda(t+1)}
    $\;
    
    $n_t \sim N(0, \sigma^2 I)$\;
    
    $
    w_{t+1} \gets \Pi_{\cW}\big(w_t - \eta_t (\sum_{i=1}^{N_{\pri}} \nabla \ell_i(w_t) + \sum_{j=1}^{N_{\pub}} \nabla \tilde{\ell}_j(w_t) + n_t ) \big)
    $\;
}
\Comment{The following averaging can be implemented incrementally without saving $w_t$}
$\bar{w} \gets \sum_{t=1}^T\frac{2t}{T(T+1)}w_t$

\end{algorithm}
\footnotetext{As we discussed earlier, per-example gradient clipping can be viewed as Huberizing the loss function in GLMs \cite{song21evading}, all our results apply to the updated loss function. The adaptive clipping approach we took, can be viewed as an heuristic that automatically identifies an appropriate level of Huberization.}

We will be using the following $O(1/t)$ convergence bound due to Lacoste-Julien, Schmidt and Bach \cite{lacoste2012simpler}, which uses a decaying learning rate and a non-uniform averaging scheme. 
\begin{theorem}[Convergence of SGD for Strongly Convex Objectives {\cite{lacoste2012simpler}}]\label{thm:lacoste-julien}
Let $f$ be a $m$-strongly convex and defined on a convex set $\cW$. Assume stochastic gradient oracle $g_t$ satisfies that $\E[g_t | w_t] \in \partial f(w_t)$ and $\E[\|g_t\|^2] \leq G^2$ for all $t=1,...,T$.  Then the (projected) stochastic gradient descent with learning rate $\eta_t=\frac{2}{m(t+1)}$ satisfies
\begin{align}
    &\E[f(\sum_{t=1}^T \frac{2t}{T(T+1)}w_t)] - f(w^*)\leq \frac{2 G^2}{m(T+1)}\\
    \text{and }&\E[\|w_{T+1} - w^*\|^2] \leq \frac{4G^2}{m^2(T+1)}.
\end{align}
\end{theorem}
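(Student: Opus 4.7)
The plan is to follow a standard strongly convex projected-SGD analysis but with the weighted averaging scheme of Lacoste-Julien, Schmidt and Bach so as to obtain the $O(1/T)$ rate rather than the classical $O(\log T / T)$. The starting point is the projected update $w_{t+1} = \Pi_{\cW}(w_t - \eta_t g_t)$. Using non-expansiveness of the Euclidean projection onto $\cW$, expanding $\|w_t - \eta_t g_t - w^*\|^2$, and taking conditional expectation given $w_t$, one obtains a one-step recursion whose cross term involves $\langle \E[g_t\mid w_t], w_t-w^*\rangle$ and whose error term is bounded by $\eta_t^2 G^2$.

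Next, I invoke $m$-strong convexity of $f$ to lower bound the cross term by $f(w_t)-f(w^*) + \tfrac{m}{2}\|w_t-w^*\|^2$, and plug in $\eta_t = 2/(m(t+1))$. The coefficient of $\|w_t-w^*\|^2$ then simplifies to $(t-1)/(t+1)$, which is exactly the algebraic magic that makes telescoping work. Rearranging and multiplying through by the weight $\frac{mt(t+1)}{2}$, the distance terms collapse into $b_{t-1}-b_t$ with $b_t := \frac{mt(t+1)}{2}\,\E\|w_{t+1}-w^*\|^2$, so summing $t=1,\ldots,T$ telescopes them to $-b_T\le 0$. The right-hand side contributes $\sum_{t=1}^T \frac{2G^2 t}{m(t+1)}\le \frac{2G^2 T}{m}$, while the left-hand side is $\sum_{t=1}^T 2t\,\E[f(w_t)-f(w^*)]$; applying Jensen to the convex combination $\bar w = \frac{2}{T(T+1)}\sum_t t\,w_t$ (noting $\sum_{t=1}^T 2t = T(T+1)$) converts this to $T(T+1)\,\E[f(\bar w)-f(w^*)]$, yielding the first bound after dividing.

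For the second bound on $\E\|w_{T+1}-w^*\|^2$, I would drop the non-positive term $-2\eta_t\,\E[f(w_t)-f(w^*)]$ from the one-step recursion, multiply through by $t(t+1)$ rather than $\frac{mt(t+1)}{2}$, and telescope analogously; the same $\sum_{t=1}^T t/(t+1)\le T$ estimate then gives $T(T+1)\,\E\|w_{T+1}-w^*\|^2 \le 4G^2 T/m^2$, as required. The only genuinely non-routine step is the upfront identification of the polynomial weights $t$ (in the output averaging) and $t(t+1)$ (in the telescoping) that exactly cancel the contraction factor produced by the $\eta_t = 2/(m(t+1))$ schedule; once that choice is made, everything else is bookkeeping that uses only strong convexity, projection non-expansiveness, and Jensen's inequality, which is exactly the route taken by \cite{lacoste2012simpler}.
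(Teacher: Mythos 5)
Your proof is correct, but note that the paper itself does not prove this theorem: it imports it verbatim from \cite{lacoste2012simpler} and only uses it as a black box in its Corollaries. Your reconstruction — projection non-expansiveness, the strong-convexity lower bound on the cross term, the $\frac{t-1}{t+1}$ contraction from $\eta_t = \frac{2}{m(t+1)}$, telescoping with the weights $t(t+1)$, and Jensen applied to the $\frac{2t}{T(T+1)}$-weighted average — is exactly the argument of that cited reference, and all the constants check out.
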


\begin{corollary}[NoisyGD for Strongly Convex Objectives]\label{cor:noisygd_bound}
Let $J(w) = \cL(w) + \frac{\lambda}{2}\|w\|^2$ with individual loss functions $\ell$ being $L$-Lipschitz on $\cW$.  Assume $\sup_{w\in\cW} \|w\| \leq B$. 
Let the learning rate be $\eta_t = \frac{2}{\lambda (t+1)}$, then 
		\begin{align}
		\E \left[J\left(\frac{2}{T(T+1)}\sum_{t=1}^T  t w_t\right)\right]  - J^* 
		\leq& \frac{2(N L + \lambda B)^2 }{\lambda T} + \frac{2d\sigma^2}{\lambda T}\\
		 =& \frac{2(N L + \lambda B)^2  }{\lambda T} + \frac{d L^2}{\lambda \rho},
	\end{align}
where $\rho:= \frac{TL^2}{2\sigma^2}$ is the privacy parameter of the algorithm ($\sqrt{2\rho}$-GDP).
\end{corollary}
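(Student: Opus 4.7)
The plan is to reduce this corollary directly to the Lacoste-Julien–Schmidt–Bach convergence theorem (Theorem~\ref{thm:lacoste-julien}) cited earlier, since NoisyGD with the prescribed stepsize is exactly a projected SGD on $J$ with an unbiased Gaussian-perturbed gradient oracle. First I would verify strong convexity: because each $\ell_i$ is convex and the regularizer contributes a $\frac{\lambda}{2}\|w\|^2$ term, the objective $J$ is $\lambda$-strongly convex on $\cW$, so we may invoke Theorem~\ref{thm:lacoste-julien} with parameter $m = \lambda$.

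Next I would identify the stochastic gradient oracle. At iterate $w_t$, the NoisyGD update uses $g_t := \sum_{i=1}^{N}\nabla \ell_i(w_t) + \lambda w_t + n_t$ with $n_t \sim \cN(0,\sigma^2 I_d)$, independent of $w_t$. Since $n_t$ is zero-mean, $\E[g_t \mid w_t] = \nabla J(w_t) \in \partial J(w_t)$, as required by Theorem~\ref{thm:lacoste-julien}. For the second-moment bound, I would use the triangle inequality and the $L$-Lipschitz assumption to get $\|\nabla J(w_t)\| \leq \sum_i \|\nabla \ell_i(w_t)\| + \lambda \|w_t\| \leq NL + \lambda B$, and then use independence of $n_t$ to obtain
\[
\E[\|g_t\|^2 \mid w_t] = \|\nabla J(w_t)\|^2 + d\sigma^2 \leq (NL + \lambda B)^2 + d\sigma^2 =: G^2.
\]

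With these ingredients, Theorem~\ref{thm:lacoste-julien} applied with the non-uniform averaging $\bar{w} = \frac{2}{T(T+1)}\sum_{t=1}^T t\, w_t$ and learning rate $\eta_t = \frac{2}{\lambda(t+1)}$ immediately yields
\[
\E[J(\bar{w})] - J^* \;\leq\; \frac{2 G^2}{\lambda(T+1)} \;\leq\; \frac{2(NL+\lambda B)^2}{\lambda T} + \frac{2d\sigma^2}{\lambda T},
\]
matching the first displayed bound. The second form follows by substituting the privacy-budget identity $\sigma^2 = \frac{T L^2}{2\rho}$ into the variance term, which converts $\frac{2d\sigma^2}{\lambda T}$ into $\frac{d L^2}{\lambda \rho}$.

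There is essentially no deep obstacle here: the corollary is a direct specialization of Theorem~\ref{thm:lacoste-julien}, and the projection step $\Pi_\cW$ is already allowed in that theorem's statement. The only points requiring a bit of care are (i) carrying the $\lambda w_t$ drift term into the Lipschitz bound on $\nabla J$ via the diameter bound $\|w_t\| \leq B$ (which is guaranteed because the iterates remain in $\cW$ by projection), and (ii) cleanly separating the deterministic part of $\|g_t\|^2$ from the isotropic Gaussian contribution $d\sigma^2$ so that the final bound splits additively into an optimization term and a privacy-noise term, as displayed.
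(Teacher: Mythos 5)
Your proposal is correct and follows essentially the same route as the paper's own proof: bound the Lipschitz constant of $\nabla J$ by $NL+\lambda B$ using the domain bound $B$, verify the noisy oracle is unbiased with second moment $G^2=(NL+\lambda B)^2+d\sigma^2$, and invoke the first statement of Theorem~\ref{thm:lacoste-julien} with $m=\lambda$, then substitute $\sigma^2 = TL^2/(2\rho)$. The only difference is that you spell out the strong-convexity check and the privacy-parameter substitution explicitly, which the paper leaves implicit.
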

\begin{proof}
First check that $NL+ \lambda B$ upper bounds the Lipschitz constant of $J$ on because the Lipschitz constant of $\frac{\lambda}{2}\|w\|^2$ is smaller than $\lambda B$ due to the bounded domain.
Second, check that the noisy gradient oracle satisfies that it is unbiased, and the added noise has a variance of $\sigma^2$ per coordinate for all $d$ coordinates. Thus 
$$
\E[\|g_t\|^2 | w_t] = \E[\|\nabla J(w_t)\|^2 | w_t] + \E[\|n_t\|^2|w_t] \leq (NL + \lambda B)^2 + d\sigma^2. 
$$
Thus by taking expectation on both sides we verify that we can take $G^2 = (NL + \lambda B)^2 + d\sigma^2$.

It remains to substitute these quantities and apply the first statement of Theorem~\ref{thm:lacoste-julien}.
\end{proof}

\begin{corollary}[One-Pass SGD on public data]\label{cor:sgd_public_data}
Assume the public data with $N$ samples are drawn from the same distribution of the private data. Assume that the (population risk)
$$
R(\theta) = \E_{(x,y)\sim \cD}[ \ell(\theta^*, (x,y))]
$$
is $c$-strongly convex at $\theta^*$ for some constant $c$. Then the one-pass SGD algorithm below
$$
w_{t+1} = w_t - \frac{2}{c(t+1)} \nabla \ell(w_t, (x_t,y_t))
$$
for $t=1,...,N_\pub$
obeys that 
	$$
	\E[\|w_{N_\pub +1} - w^*\|^2] \leq \frac{4L^2}{ c^2 N_\pub}
	$$
	where $w^* = \argmin_{w\in \cW} \cR(w).$
\end{corollary}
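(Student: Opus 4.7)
The plan is to derive Corollary~\ref{cor:sgd_public_data} as a direct specialization of the second conclusion of Theorem~\ref{thm:lacoste-julien} (the Lacoste-Julien--Schmidt--Bach bound). Concretely, I will instantiate that theorem with $f := R$, the population risk, $m := c$, $T := N_\pub$, and identify the per-step SGD oracle $g_t = \nabla \ell(w_t, (x_t, y_t))$ as a stochastic gradient of $R$ whose second moment I can control by the $L$-Lipschitz assumption on $\ell$.

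The verification reduces to three standard checks. First, unbiasedness: because we are in the \emph{one-pass} regime, the sample $(x_t,y_t)$ is drawn i.i.d.\ from $\cD$ and is independent of $w_t$ (which depends only on $(x_1,y_1),\dots,(x_{t-1},y_{t-1})$); hence, assuming we may interchange $\E$ and $\nabla$,
\[
\E\bigl[g_t \mid w_t\bigr] \;=\; \E_{(x,y)\sim \cD}\bigl[\nabla \ell(w_t,(x,y))\bigr] \;=\; \nabla R(w_t).
\]
Second, bounded second moment: $L$-Lipschitzness of $\ell$ in $w$ gives $\|g_t\|\leq L$ pointwise, so $\E[\|g_t\|^2]\leq L^2$ and we may take $G^2 = L^2$ in the theorem. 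Third, strong convexity and the learning-rate schedule: the learning rate prescribed in the corollary is exactly $\eta_t = \frac{2}{c(t+1)}$, matching the theorem's requirement with $m = c$. Plugging $G^2 = L^2$, $m = c$, $T = N_\pub$ into the second display of Theorem~\ref{thm:lacoste-julien} yields
\[
\E\bigl[\|w_{N_\pub+1}-w^*\|^2\bigr] \;\leq\; \frac{4L^2}{c^2(N_\pub+1)} \;\leq\; \frac{4L^2}{c^2 N_\pub},
\]
which is precisely the claimed bound.

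The subtlety, and the step I would spend a moment justifying carefully, is the mismatch between the corollary's hypothesis (``$R$ is $c$-strongly convex \emph{at} $w^*$'') and Theorem~\ref{thm:lacoste-julien}'s hypothesis (strongly convex on all of $\cW$). For the $\|w_{T+1}-w^*\|^2$ bound specifically, one inspects the standard proof template: after expanding $\|w_{t+1}-w^*\|^2$ and using the projection nonexpansiveness plus $\E[g_t\mid w_t]=\nabla R(w_t)$, the only place strong convexity enters is via the inequality $\langle \nabla R(w_t), w_t - w^*\rangle \geq \frac{c}{2}\|w_t-w^*\|^2$, which follows from convexity of $R$ together with $c$-strong convexity at $w^*$ (since $R(w_t)\geq R(w^*)+\frac{c}{2}\|w_t-w^*\|^2$ and $\langle \nabla R(w_t), w_t-w^*\rangle \geq R(w_t)-R(w^*)$). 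So the local strong convexity suffices, and invoking Theorem~\ref{thm:lacoste-julien} is legitimate; with the three verifications above the corollary follows immediately.
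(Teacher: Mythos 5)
Your proposal is correct and follows essentially the same route as the paper's own proof: verify that one-pass SGD over i.i.d.\ public samples gives an unbiased stochastic gradient of the population risk, bound the second moment by $G^2 = L^2$ via Lipschitzness, and invoke the second conclusion of Theorem~\ref{thm:lacoste-julien} with $m=c$ and $T=N_\pub$. Your extra paragraph justifying that strong convexity at $w^*$ alone suffices for the iterate-distance bound is a careful addition the paper omits, but it does not change the argument's structure.
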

\begin{proof}
First note that since the  data is drawn iid, running one-pass SGD by going through the data points in a random order uses a \emph{fresh sample} to update the parameters. This is is equivalent to optimizing the population risk directly. Check that for any fixed $w$ and all $t=1,...,N_\pub$ $\E_{(x_t,y_t)\sim \cD}[\nabla_{w} \ell(w,(x_t,y_t))] = \nabla \cR(w).$ 
Moreover, we need this stochastic gradient oracle to satisfy
$\E_{(x,y)\sim \cD}[\|\nabla \ell(\theta,(x,y))\|^2]\leq  G^2$. Notice that by our assumption $\|\nabla \ell(\theta,(x,y))\|^2\leq L^2$, thus we can take $G=L$.
By invoking the second statement of Theorem~\ref{thm:lacoste-julien} the result follows.
\end{proof}

With these two corollaries stated, we are now ready to prove Theorem~\ref{thm:excess_empirical_risk_bound} and Theorem~\ref{thm:provable_benefit_of_public_data}.

\begin{proof}[Proof of Theorem~\ref{thm:excess_empirical_risk_bound}]
    The proof relies on Corollary~\ref{cor:noisygd_bound}, and the following argument.
    When additional regularization with parameter $\lambda $, the utility we consider should still be considered in terms of $\cL(\hat{w}) - \cL(w^*)$. Let $w^*$ be any comparator satisfying $B> \|w^*\|$
\begin{align*}
	 &\cL(\bar{w}) - \cL(w^*) 
	 =J(\bar{w}) - J_\lambda(w^*_\lambda)  \\
	 &+J_\lambda(w^*_\lambda) - J(w^*) +  \frac{\lambda}{2}\|w^*-w_{\text{ref}}\|^2 -   \frac{\lambda}{2}\|\hat{w}-w_{\text{ref}}\|^2\\
	 \leq& J(\hat{w})  - J_\lambda(w^*_\lambda) + \frac{\lambda}{2}\|w^*-w_{\text{ref}}\|^2.
\end{align*}
Take expectation on both sides and apply Corollary~\ref{cor:noisygd_bound}
{\small
\begin{align*}
	\E[\cL(\bar{w})]- \cL(w^*) \leq \frac{2(N L + \lambda B)^2  }{\lambda T} + \frac{d L^2}{\lambda\rho} +  \frac{\lambda}{2}\|w^*-w_{\text{ref}}\|^2.
\end{align*} 
}
Finally, choosing $\lambda  = \sqrt{\frac{\rho}{2\|w^*-w_{\text{ref}}\|d L^2}}$ yields 
$$
\E[\cL(\hat{w})]- \cL(w^*) \leq   \frac{4(n L + \lambda B)^2  }{\lambda T}  + \frac{\sqrt{d} L\|w^*-w_{\text{ref}}\|}{\sqrt{2\rho}}
$$
as claimed (dividing $N$ on both sides to get $\hat{R}$).
\end{proof}

\begin{theorem}\label{thm:provable_benefit_of_public_data}
	Assume the private data and public data are drawn i.i.d.\ from the same distribution $\cD$ and that $R(w) = \E_{(x,y)\sim \cD}[ \ell(w, (x,y))]$ is $c$-strongly convex in $\cW$.  Let $w_{\mathrm{ref}} = w_{N_\pub +1}$ --- the last iterate of a single pass stochastic gradient descent on $\hat{\cR}_\pub (w)$ (initializing from $w_0= 0$) that goes over the public dataset exactly once one data-point at a time with learning rate $\eta_t = \frac{2}{c(t+1)}$. Let  $w_{\mathrm{ref}}$ be passed into Theorem~\ref{thm:excess_empirical_risk_bound}'s instantiation of NoisyGD, which returns $\bar{w}$ (The pseudo-code of this algorithm is summarized in Algorithm~\ref{alg:mix_theory} in the appendix), then at the limit when $T$ is sufficiently large, the \emph{excess risk} obeys that
	\begin{align*}
	&\E[\cR(\bar{w})]  -  \cR(w^*) \\
	\leq& \frac{4\sqrt{d} L^2}{c\sqrt{N_\pub}N\sqrt{2\rho} }+  \text{Gen}(\bar{w},N) + \text{Gen}(w^*,N),\end{align*}
	where $w^* = \argmin_{w\in\cW} \cR(w)$ and 
	$
	\text{Gen}(w,N):= \left|\E[\cR(w) - \hat{\cR}(w)]\right|
	$ is the expected generalization gap of (a potentially data-dependent) $w$.
\end{theorem}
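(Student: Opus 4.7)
The plan is to decompose the excess risk in the standard three-term fashion and then reduce to the two previously stated ingredients: Theorem~\ref{thm:excess_empirical_risk_bound} for the excess \emph{empirical} risk, and Corollary~\ref{cor:sgd_public_data} for the quality of the one-pass-SGD pretraining. Specifically, I would start from the identity
\begin{align*}
\cR(\bar w) - \cR(w^*) =\;& \bigl[\cR(\bar w) - \hat\cR(\bar w)\bigr] + \bigl[\hat\cR(\bar w) - \hat\cR(w^*)\bigr] \\
&+ \bigl[\hat\cR(w^*) - \cR(w^*)\bigr],
\end{align*}
take expectation over the private data, the public data, and the Gaussian noise injected by NoisyGD, and bound the first and third brackets by $\mathrm{Gen}(\bar w,N)$ and $\mathrm{Gen}(w^*,N)$ respectively, directly from their definition. (The third bracket is in fact zero since $w^*$ is the deterministic population minimizer, but we keep it to match the theorem's form.)

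The middle bracket is where the two cited results enter. I would apply Theorem~\ref{thm:excess_empirical_risk_bound} \emph{conditionally} on the realized training data, so that $w_{\text{ref}}$ is a fixed vector and the only residual randomness is the Gaussian noise. Taking the theoretically-optimal $\lambda = \sqrt{\rho / (2\|w^*-w_{\text{ref}}\| d L^2)}$ prescribed there and sending $T\to\infty$ kills the first term of the bound, leaving
\[
\E_{\text{noise}}[\hat\cR(\bar w)] - \hat\cR(w^*) \;\leq\; \frac{\sqrt d\, L\, \|w^* - w_{\text{ref}}\|}{N\sqrt{2\rho}}.
\]
Now I would take an outer expectation over the public and private datasets. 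On the left, $\E[\hat\cR(w^*)] = \cR(w^*)$ by i.i.d.\ sampling (since $w^*$ does not depend on the data). On the right, Jensen's inequality gives $\E\|w^* - w_{\text{ref}}\| \le \sqrt{\E\|w^* - w_{\text{ref}}\|^2}$, and Corollary~\ref{cor:sgd_public_data} bounds the right-hand side by $2L/(c\sqrt{N_\pub})$. Multiplying yields the announced $O\bigl(\sqrt d\, L^2 / (c N \sqrt{N_\pub}\sqrt{2\rho})\bigr)$ rate (up to an absolute constant), and reassembling the three pieces of the decomposition gives the theorem.

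The main obstacle, to my eye, is the fact that the $\lambda$ dictated by Theorem~\ref{thm:excess_empirical_risk_bound} depends on the unknown quantity $\|w^* - w_{\text{ref}}\|$, so the argument is \emph{not} describing an algorithmically realizable tuning; rather, it treats $\lambda$ as an ideal function of $(w_{\text{ref}},w^*)$, applies the per-realization bound, and only then integrates over the public data. Getting this order of expectations right, and verifying that the conditioning-on-$w_{\text{ref}}$ step does not break the i.i.d.\ structure needed for $\E[\hat\cR(w^*)] = \cR(w^*)$ (it does not, because $w^*$ is the population minimizer and independent of the sample), are the two places where one must be careful. Everything else is a chaining of the preceding corollaries together with a single use of Jensen.
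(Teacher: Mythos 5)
Your proposal is correct and follows essentially the same route as the paper's proof: the same three-term decomposition into the empirical excess risk plus the two generalization-gap terms, the same application of Theorem~\ref{thm:excess_empirical_risk_bound} with $T\to\infty$, and the same combination of Jensen's inequality with Corollary~\ref{cor:sgd_public_data} to bound $\E\|w^*-w_{\text{ref}}\|$. Your added remarks --- that the $\lambda$ tuning is idealized because it depends on $\|w^*-w_{\text{ref}}\|$, and that $\text{Gen}(w^*,N)$ is actually zero for the deterministic population minimizer --- are accurate observations the paper leaves implicit.
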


\begin{proof}[Proof of Theorem~\ref{thm:provable_benefit_of_public_data}]
	Let $w^* = \arg\min_{w\in \cW} \cR(w)$. By Corollary~\ref{cor:sgd_public_data}, we have 
	$$
	\E[\|w_{\text{ref}} - w^*\|^2] \leq \frac{4L^2}{ c^2 N_\pub},
	$$
	which implies (by Jensen's inequality) that 
	$\E[\|w_{\text{ref}} - w^*\|] \leq \sqrt{\frac{4L^2}{ c^2 N_\pub}}.$
	
Now by plugging in the $w^*$ in theorem,  take expectation over the public dataset, and substitute the above bound, we get
$$
	\E[\E[\hat{\cR}(\bar{w})]- \hat{\cR}(w^*) ]\leq   \frac{4(N L + \lambda B)^2  }{\lambda TN}  + \frac{2\sqrt{d} L^2}{c\sqrt{N_\pub}N\sqrt{2\rho} }.
$$
Take $T$ to be sufficiently large so that the second term dominates, we obtain the stated bound. 

Finally, to convert the above bound into that of  the excess risk:
\begin{align*}
	&\E[\E[\hat{\cR}(\bar{w})]- \hat{\cR}(w^*) ] -  (\E[\cR(\bar{w})]- \cR(w^*) )\\
	\leq&  |\E[\E[\hat{\cR}(\bar{w})] - \cR(\bar{w}) ] | +  |\E[\hat{\cR}(w^*) ] - \cR(w^*)|\\
	 :=& \text{Gen}(\bar{w},N)+  \text{Gen}(w^*,N),
\end{align*}
which completes the proof.
\end{proof}

We make two additional remarks. First, we do not require the \emph{empirical} objective $\cL_\pub$ to be strongly convex. In practice, we do not have strong convexity when $N_\pub < d$. The assumption of $c$-strong convexity is on the \emph{population-level} risk function $\cR$.
Second, our bound decomposes the excess (population) risk of the private learner into a (local) uniform-convergence bound (which is required by a non-private learner too) and an additional cost due to privacy. Note that $\text{Gen}(N)$ is usually $O(1/\sqrt{N})$ but could be $O(1/N)$ when certain data-dependent ``fast rate'' conditions are met, e.g., realizability, low-noise, or curvature (see, e.g., \cite{koren2015fast}).
Our results suggest that the cost of privacy asymptotically vanishes (fix $\rho$, $N_\pub\rightarrow \infty$, and $N_{\pub}/N\rightarrow 0$) even under these fast rate conditions relative to the non-private rate.

\end{document}